\newcommand{\tomato}{\texttt{ToMATo}}
\DeclareMathOperator*{\argmax}{arg\,max}
\DeclareMathOperator*{\argmin}{arg\,min}
\DeclareMathOperator{\PS}{PS}
\DeclareMathOperator{\TR}{TR}
\DeclareMathOperator{\PTR}{PTR}
\DeclareMathOperator{\supp}{Supp}
\newcommand{\obs}{\mathbf{x}}
\newcommand{\Set}{\mathcal{S}}
\newcommand{\INs}{\mathcal{X}}
\newcommand{\OUTs}{\mathcal{Y}}
\newcommand{\real}{\mathbb{R}}
\newcommand{\nat}{\mathbb{N}}
\newcommand{\Prob}{\mathbb{P}}
\newtheorem{theorem}{Theorem}
\newtheorem{result}{Result}
\newtheorem{definition}{Definition}
\newtheorem{lemma}{Lemma}
\DeclarePairedDelimiterX{\infdivx}[2]{(}{)}{%
 #1\,\delimsize\|\,#2%
}
\title{Pool-Based Active Learning with Proper Topological Regions}
\author{Lies Hadjadj$^\dagger$, Emilie Devijver$^\dagger$, Remi Molinier$^\ddagger$, Massih-Reza Amini$^\dagger$\\ 
\{Firstname.Lastname\}@univ-grenoble-alpes.fr\\
$^\dagger$ Computer Science Laboratory (LIG)\\
$^\ddagger$ Department of Mathematics (IF)\\
University of Grenoble Alpes, France}
\date{}
\begin{document}

\maketitle

\begin{abstract}Machine learning methods usually rely on large sample size to have good performance, while it is difficult to provide labeled set in many applications. Pool-based active learning methods are there to detect, among a set of unlabeled data, the ones that are the most relevant for the training.
We propose in this paper a meta-approach for pool-based active learning strategies in the context of multi-class classification tasks based on Proper Topological Regions. PTR, based on topological data analysis (TDA), are relevant regions used to sample cold-start points or within the active learning scheme. The proposed method is illustrated empirically on various benchmark datasets, being competitive to the classical methods from the literature. 
\end{abstract}

\section{Introduction}
In recent years, machine learning has found gainful applications in diverse domains, but it still has a heavy dependence on expensive labeled data: Advances in cheap computing and storage have made it easier to store and process large amounts of unlabeled data, but the labeling often needs to be done by humans or using costly tools. Therefore, there is a need to develop general domain-independent methods to learn models effectively from a large amount of unlabeled data at the disposal, along with a minimal amount of labeled data: this is the framework of semi-supervised learning. Active learning aims explicitly to detect the observations to be labeled to optimize the learning process and efficiently reduce the labeling cost.  
The primary assumption behind active learning is that machine learning algorithms could reach a higher level of performance while using a smaller number of training labels if they were allowed to choose the training dataset \citep{settles09}. The most common active learning approaches are pool-based methods \citep{Lew94} based on a set of unlabeled observations. First, some points are labeled to train a classification model, and then, at each iteration, we choose unlabeled examples to query based on the predictions of the current model and a predefined priority score. These approaches show their limitations in low-budget regime scenarios because they need a sufficient budget to learn a weak model \citep{pour21}. The literature has shown that for active learning to operate in a low-budget regime successfully, there is a need to introduce a form of regularization in training \citep{guyon11} usually found in other sub-domains, such as semi-supervised learning or self-learning \citep{Chapman06}. Another line of work shows that the choice of the initial seed set in these approaches significantly impacts the end performance of their models \citep{Hu10, Chen22}, also known as the cold-start problem in active learning.

We close the gap in this paper by providing a theoretically founded meta-approach for pool-based active learning based on concepts from topological data analysis (TDA) to improve performance in a low-budget regime and avoid the cold-start problem. TDA aims to extract information on the structure of the data by examining its topological properties \citep{Edel10}, the insight being that nontrivial topologies should be exploited to improve data analysis \citep{carlsson12}. This structure can be detected by flexible tools based on algebraic topology, for example, using persistent homology based on Rips complexes \citep{Haus95}: topological information is then encoded with persistence modules and diagrams \citep{Edel10}. It has already shown impressive results in machine learning \citep{Rieck20,Jia21, Kri21}, especially for clustering. Many recent papers have benefited from these topological insights to understand the structure of the data: \citet{Sin07} use persistence homology to extract molecular topological fingerprints (MTFs) based on the persistence of molecular topological invariants, \citet{Lum13} use topological persistence to efficiently encode fMRI datasets, \citet{Gar20} use persistence homology to automatically extract interpretable features from meta-organic datasets in order to predict methane and carbon dioxide adsorption levels for different materials, among others, and \citet{Li20} also make use of topological persistence in order to actively estimate the homology of the Bayes decision boundary, the resulted module is then used to do model selection from several families of classifiers.

In this paper, we propose to extend \tomato{} \cite{Chazal13}, a persistent-based clustering algorithm that respects the underlying topology, to detect \emph{proper topological regions} where one can safely propagate labeling (assuming that the clustering is coherent with the metric). More precisely, our approach is based on the following:
\begin{itemize}
 \item the introduction of proper topological regions using the $\sigma$-Rips graph based on an adaptive threshold function and the extension of \tomato's theoretical guarantees to the $\sigma$-Rips graph; 
 \item the use of proper topological regions in a zero-shot learning method and a pool-based active learning scheme.
\end{itemize}
 This is illustrated in an empirical study with several active learning strategies which shows that our approach for zero-shot learning and pool-based active learning improves over classical methods on several datasets. 
 
The remainder of the paper is organized as follows. Section \ref{sec:SOTA} describes the related work. The framework is introduced in Section \ref{sec:framework}. The method is developed in Section \ref{sec:method}; and illustrated in Section \ref{sec:experiments}. Finally, Section \ref{sec:concl} concludes the paper.

\section{Related literature}\label{sec:SOTA}
Different attempts have been made to reduce the annotation burden of machine learning algorithms. We can refer to the remarkable advances made in semi-supervised learning \citep{AminiUsunier15, Bert19}. These methods take as input a small set of labeled training data together with a large number of unlabeled examples. They introduce a form of consistency regularization to the supervised loss function by applying data augmentation using unlabeled observations \citep{Chapman06}.
The most commonly known pool-based strategies are uncertainty sampling \citep{Lew94, zhu08}, margin sampling, and entropy sampling strategies \citep{settles09}. Some proposed strategies rely on the query-by-committee approach \citep{yan11, lakshmi2016}, which learns an ensemble of models at each round. Query by bagging and query by boosting are two practical implementations of this approach that use bagging and boosting to build the committees \citep{Abe98}. There has been exhaustive research on how to derive efficient disagreement measures and query strategies from a committee, including vote entropy, consensus entropy, and maximum disagreement \citep{settles09}, whereas \citet{Ali14} introduces model selection for a committee. Some research focuses on solving a derived optimization problem for optimal query selection, e.g. in \citet{RoyM01} they use Monte Carlo estimation of the expected error reduction on test examples. In contrast, other strategies employ Bayesian optimization on acquisition functions such as the probability of improvement or the expected improvement \citep{garnett22}, and in \citet{Auer02}, the authors propose to cast the problem of selecting the most relevant active learning criterion as an instance of the multi-armed bandit problem. Aside from the pool-based setting, in the stream-based setting \citep{Lughofer12, Baram04}, each unlabeled sample is given to the learner individually, and he queries its label if he finds it helpful.

Recent advances in active learning propose enhancing the pool-based methods by extracting knowledge from the distribution of unlabeled examples \citep{Bon11}. \citet{Fab18} propose to use clustering of unlabeled examples to boost the performance of pool-based active learners, with the expert annotating at each iteration cluster rather than single examples. Such a strategy effectively reduces the annotation effort, assuming that the cost of cluster annotation is comparable to single example labeling, as used in \citet{citovsky2021batch} to operate on large-scale data. Similarly, \citet{Krem15} proposes to combine clustering with Bayesian optimization in the stream-based setting. \cite{Yu17} propose a two-stage clustering constraint in the active learning algorithm, a first exploration phase to discover representative clusters of all classes, and a post-clustering reassignment phase where the learner is constrained on the initial clusters found at the first stage. Clustering methods also show promising results for addressing the cold-start problem in pool-based active learning strategies \citep{Hu10, Chen22}. In \citet{Urner13} authors propose a procedure for binary domain feature sets to recover the labeling of a set of examples while minimizing the number of queries. They show that this routine reduces label complexity for training learners.
Recently, many studies have explored the use of clustering/segmentation for active sample selection in real applications \cite{andresini_2023, thoreau_2022}. 

\section{Framework and topological considerations}
\label{sec:framework}
We introduce in this section the framework of active learning and the topological background needed to develop the proposed method. First, we introduce the framework and the main topological notions. Then we define the persistence and upper-star filtrations. Finally, we provide a comparison of persistence diagrams for Rips graph and $\sigma$-Rips graph, which allows us to extend \tomato{} results to the $\sigma$-Rips graph. 

\subsection{Framework and notations}

We consider a multi-class classification problem such that the input space is $\INs \subset \real^m$ and the output space $\OUTs=\{1,\dots,c\}$ is a set of unknown classes of cardinal $c \in \nat, c\geq 2$. Let $d$ be a fixed distance on $\mathbb{R}^m$. 
In pool-based active learning, we observe a sample set $\Set_\obs = \{\obs_i\}_{i=1}^n$ coming from an unknown marginal distribution $\Prob$, and we have access to an oracle $\mathcal{O}: \INs \rightarrow \OUTs$ that can provide the true label $y_i$ for every observation $\obs_i$, for $1\leq i \leq n$ at some (expensive) cost. 
We denote $\Set = \{(\obs_i, y_i)\}_{i=1}^n$ the labeled data sample of size $n$, which we do not have access to, generated by some unknown joint distribution over $\INs \times \OUTs$.

In our method, as generally is the case in classification algorithms, we assume that close samples (with respect to $d$) are associated with similar labels, also known as the \textit{smoothness assumption}.
In that setting, one can consider neighborhood graphs on the unlabeled sample $\Set_\obs$. A graph is denoted as a couple $(V,E)$ with $V$ the set of vertices, and $E$ the set of edges. For our purpose, we use a neighborhood graph induced by the metric $d$ on $\INs$.
\begin{figure}[t]
 \includegraphics[scale=0.4]{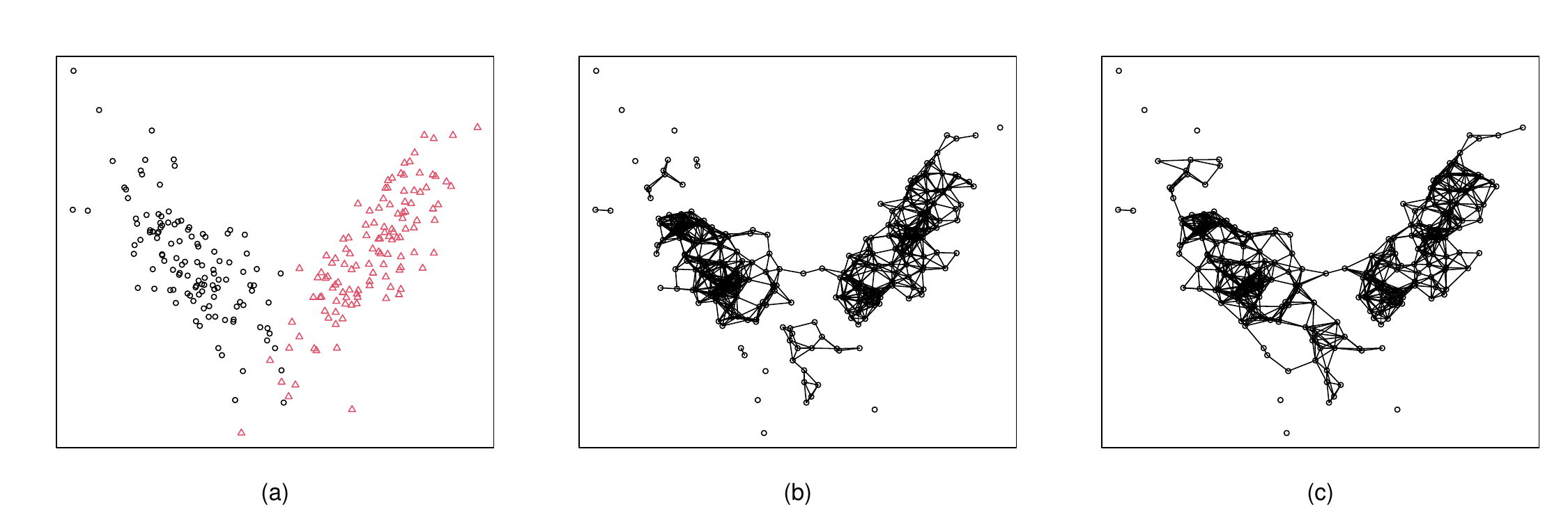}
 \caption{(a) A sample of 240 points, generated from a mixture of two bivariate Gaussian distributions. Colors represent the true classes. (b) Associated Rips graph as defined in Def. \ref{def:rips} with $\delta = 0.5$. (c) Associated $\sigma$-Rips graph as defined in Def. \ref{def:sigma_rips}, using the parametric form given in Eq. \eqref{sigma_delta_def} with $\delta=0.5$, $r=1.08$ and $t = 1/5$. }
 \label{fig:rips_graph}
\end{figure}

\begin{definition}[Rips graph] \label{def:rips}
Given a finite point cloud $\Set_\obs = \{\obs_i\}_{i=1}^n$ from a metric space $(\INs, d)$ and $\delta \geq 0$, the \emph{Rips graph} $R_\delta(\Set_\obs)$ is the graph with set of vertices $\Set_\obs$ and whose edges correspond to the pairs of points $(\obs_i, \obs_j) \in \Set_\obs^2$ such that $d(\obs_i,\obs_j) \leq \delta$.
\end{definition}

Rips graphs, or more generally Rips complexes \citep{Chazal14}, are classical in topology and are classically used in TDA, in particular with persistent homology.
However, class similarity might be different over the metric space. For example, lower is the density, weaker is the chance to detect a structure within points.
Consequently, we need to generalize the definition of the Rips graph to take into account such cases, namely the $\sigma$-Rips graph $R_{\sigma(\cdot)}(\Set_\obs)$ for an adaptive threshold function $\sigma$. 

\begin{figure}[t]
 \includegraphics[scale=0.4]{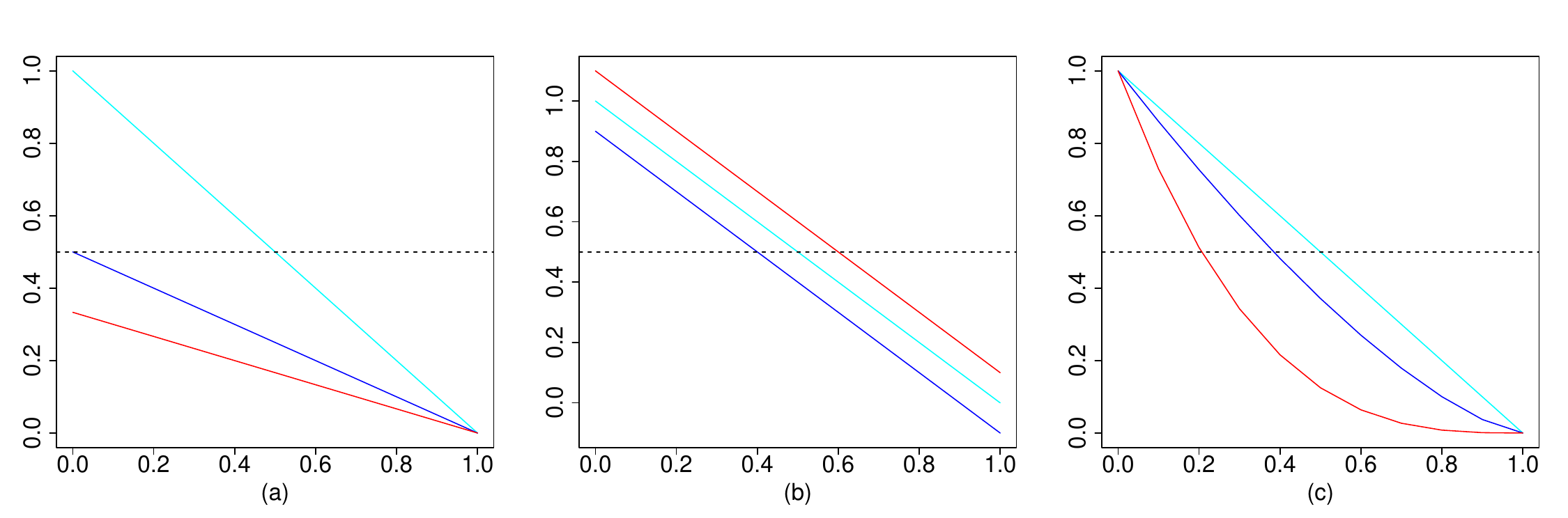}
 \caption{Representation of $s : u \mapsto \delta(r-u)^{1/t}$ as a proxy of the parametric form of $\sigma$ given in Eq. \eqref{sigma_delta_def}, varying the parameters. By default, all the parameters are fixed to 1. We vary in (a) $\delta \in \{1, 0.5, 1/3\}$, in (b) $r \in \{0.9,1, 1.1\}$, in (c) $t \in \{1, 0.7, 1/3\}$. Dashed line is for a constant threshold function $\sigma=0.5$. }
 \label{fig:sigma}
\end{figure}
\begin{definition}[$\sigma$-Rips graph] \label{def:sigma_rips}
Given a finite point cloud $\Set_\obs = \{\obs_i\}_{i=1}^n$ from a metric space $(\INs, d)$ and a real-valued function $\sigma \colon \INs^2 \to \real^*_+$, the \emph{$\sigma$-Rips graph} $R_{\sigma(\cdot)}(\Set_\obs)$ is the graph with set of vertices $\Set_\obs$ and whose edges correspond to the pairs of points $(\obs_i, \obs_j) \in \Set_\obs^2$ such that $d(\obs_i,\obs_j) \leq \sigma(\obs_i,\obs_j)$.
\end{definition}
Those two notions of neighborhood graph are illustrated in Figure \ref{fig:rips_graph} to understand the differences. When the density is lower (few points), the $\sigma$-Rips graph is more connected, to enforce the structure to appear. 

The $\sigma$-Rips graph can be seen as a generalization of the Rips graph, which considers constant threshold function, or as a $\delta$-Rips graph on the non-metric space $(\INs, \hat{d})$, with
\begin{align}
 \hat d \colon \INs \times \INs & \longrightarrow \real^+\nonumber\\
 (\obs, \obs') & \longrightarrow \frac{\delta d(\obs, \obs')}{ \sigma(\obs, \obs')}.\label{dchapeau}
\end{align}
Most of the topological properties of Rips graphs on metric spaces are true for Rips graphs on non-metric spaces, as mentioned in \citet[Section 4.2.5]{Chazal14}.

In this work, we choose the following parametric threshold function:
\begin{align}
\label{sigma_delta_def}
\begin{split}
 \sigma(\cdot;\delta, r, t) \colon \INs \times \INs & \longrightarrow \real^*_+\\
 (\obs, \obs') & \longrightarrow \delta (r-\max{(\Prob(\obs), \Prob(\obs'))})^{\frac{1}{t}},
\end{split}
\end{align}
with $t \in (0, 1]$ and $(\delta, r) \in (\real^*_+)^2$ such that $r > \max_\obs \Prob(\obs)$. The temperature parameter $t$ controls the curvature, the $\max$ term ensures that the function is symmetric. Then, $\delta$ and $r$ are, respectively, dilatation and translation parameters. This parametric form is illustrated in Figure \ref{fig:sigma}. We show in Section~\ref{sec:exp:graphs} that the curve resulting from the best parameters of our function confirms our intuition on the class similarity being a density-aware measure.

\subsection{Persistence and upper-star filtrations}

In order to detect the underlying topology from a point cloud, our method is based on the notion of persistence, and more precisely, persistent homology, a classical tool in TDA \citep{Sin07, Edel10,carlsson12}.

\begin{figure}
\centering
 \includegraphics[scale=0.6]{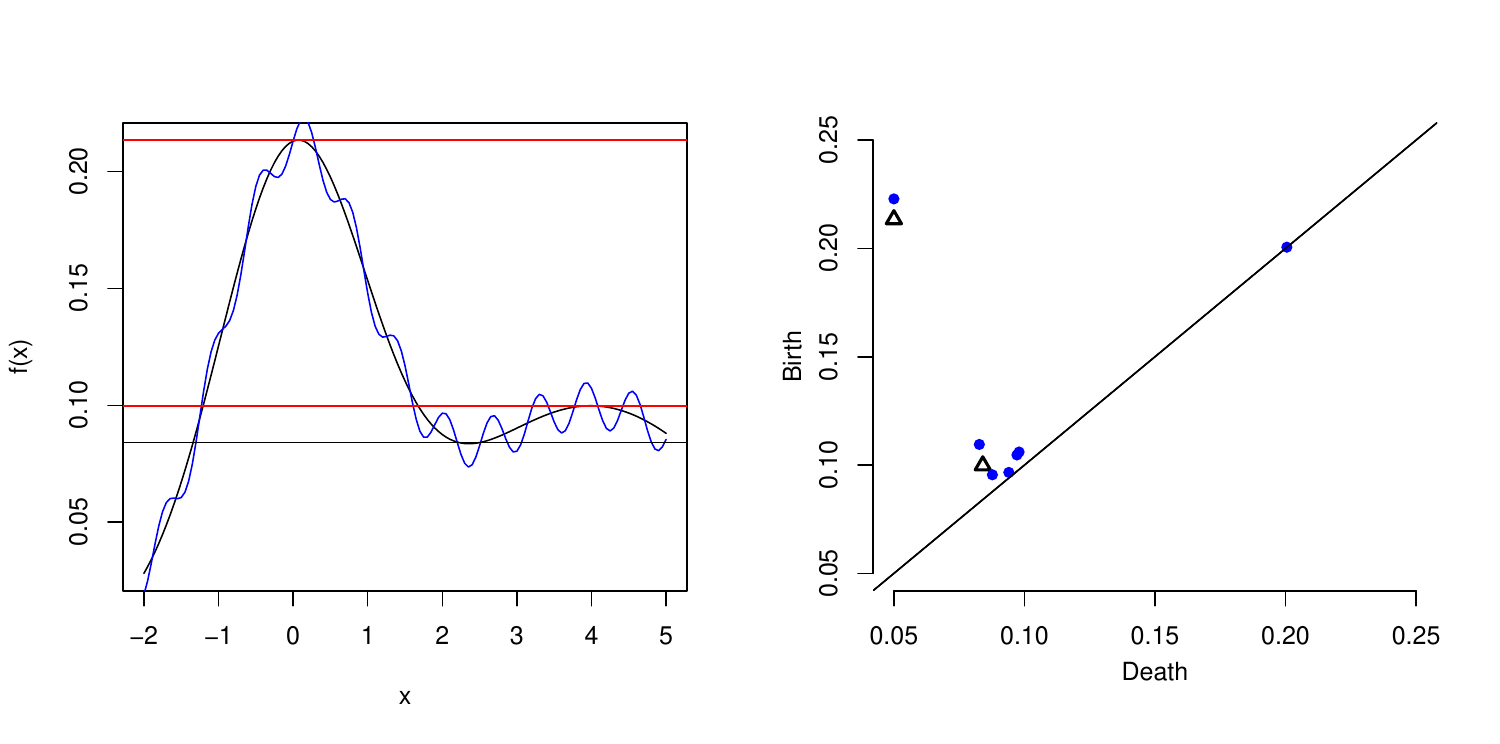}
 \caption{Persistence diagram $D\mathbb{P}$ (on the right) associated to the upper-star filtration of the functions whose graph is given on the left. The function in black is smooth, corresponding persistence diagram is given with black triangles. There are two elements in the persistence diagram, corresponding to the two peaks of the function. Peaks and valleys are highlighted on the left in red and black respectively, to read the values of birth and death in the persistent diagram. Notice that the high left point on the diagram is actually a point "at infinity", i.e. its death is $-\infty$. The function in blue is a noisy version with many peaks and valleys. Its persistence diagram corresponds to the blue points. There are two bluepoints a bit above the two black triangle, corresponding the the main topological features, and many points close to the diagonal, with low prominence, suggesting that this is topological noise. }
 
 \label{fig:DP}
\end{figure}
 
A \emph{persistence module} is a sequence of vector spaces $\mathbf{X}=(X_\alpha)_{\alpha\in\overline{\real}}$ where $\overline{\real}=\real\cup\{-\infty,+\infty\}$ together with linear maps $\varphi_{\beta,\alpha}\colon X_\beta\to X_\alpha$ whenever $\alpha \leq \beta$ (setting $X_\alpha\to X_\alpha$ as the identity) and such that, if $\alpha\leq \beta\leq \gamma$, then $\varphi_{\gamma,\alpha}=\varphi_{\beta,\alpha}\circ\varphi_{\gamma,\beta}$. In such a framework, one can study the \emph{persistence} of a vector. More precisely, given $\alpha\in \overline{\real}$ and $v\in X_\alpha$, we say that $v$ is \emph{born} at time\footnote{Here, according to the way our spaces are connected, the \emph{time} is flowing in the other direction: from $+\infty$ to $-\infty$.} $\alpha$ if $v$ is not in the image of  $\varphi_{\beta,\alpha}$ for all $\beta>\alpha$, and we say that it \emph{dies} at time $\gamma\leq \alpha$ if $\varphi_{\alpha,\gamma}(v)=0$ but $\varphi_{\alpha,\gamma'}(v)\neq 0$ for all $\gamma'$ with $\gamma<\gamma'<\alpha$.
Globally, we usually consider bases of the $X_\alpha's$ (and related to the linear maps $(\varphi_{\alpha,\beta})_{\beta\leq \alpha}$) and summarize their persistence with a persistence diagram. More precisely, the \emph{persistence diagram} $D \mathbf X$ of a persistence module $\mathbf X$ is the multi-set\footnote{A multi-set $A$ is a set with potential repetitions of elements, where we denote $\mu(p)$ the multiplicity of point $p \in \supp(A)$. It can be denoted $A = \bigcup_{p \in \lvert A \rvert} \coprod_{i=1}^{\mu(p)} p$ with $\supp(A)$ the support of $A$. } of points in $\overline{\real}^2$ consisting in the diagonal\footnote{The multiplicity of a point in the diagonal is $+\infty$.} $\Delta=\{(x,x)\mid x\in \overline{\real}\}$ and points $(i,j)$ for each basis element appearing at time $i$ and dying at time $j<i$. When reading a persistence diagram, one should consider the distance of the points to the diagonal, i.e., their \emph{prominence}. A point with low prominence should be considered as topological noise (they do not live long) whereas a point with high prominence as relevant topological information.
Persistence modules and diagrams are often used with homology, and we refer the reader to \cite{Hatcher00} for more details. Here we only use the 0-dimensional homology, which detects connected components. More precisely, if $T$ is a topological space or a graph, $H_0(T)$ is the vector space spanned by the (path) connected components of $T$. Moreover, a continuous map $T_1\to T_2$, between spaces or a graph homomorphism between graphs, induces a natural linear application $H_0(T_1)\to H_0(T_2)$. With that in hand, a classical example of persistence module is induced by the upper-star filtration of a function $\Prob\colon \INs \to \real^+$. If $\alpha\leq \beta$ are two reals, then there is an inclusion $\Prob^{-1}([\beta,+\infty])\subseteq \Prob^{-1}([\alpha,+\infty])$, and this induces linear maps $H_0(\Prob^{-1}([\beta,+\infty]))\to H_0(\Prob^{-1}([\alpha,+\infty]))$ which defines a persistence module. We denote by $D\Prob$ the associated persistence diagram. This notion is illustrated in Figure \ref{fig:DP} with two functions: the black one, very smooth with only two peaks, and the blue one, a noised version of the first one. 

\begin{figure}
 \includegraphics[scale=0.55, trim = 0 1cm 0 2cm, clip=TRUE]{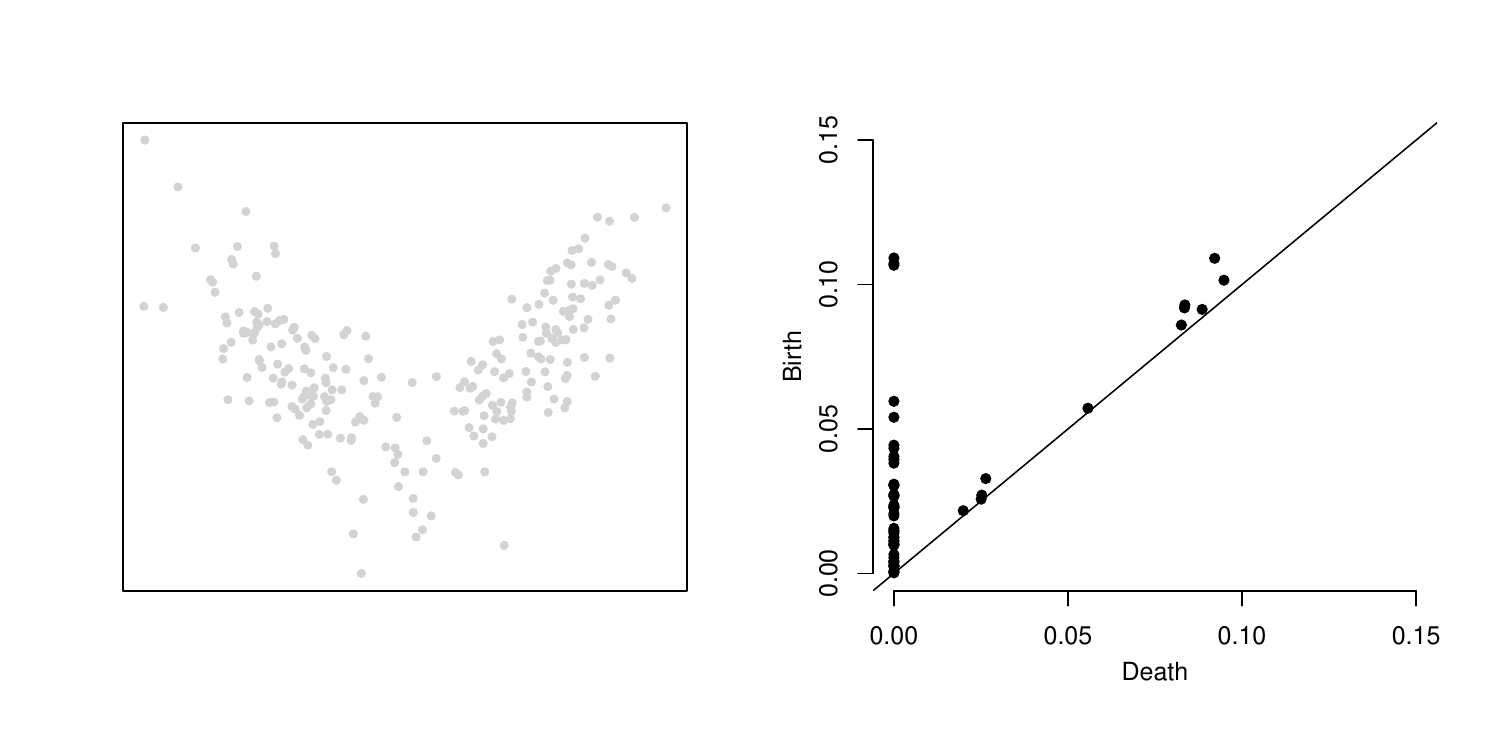}\\
 \includegraphics[scale=0.3, trim = 0 1cm 0 1cm, clip=TRUE]{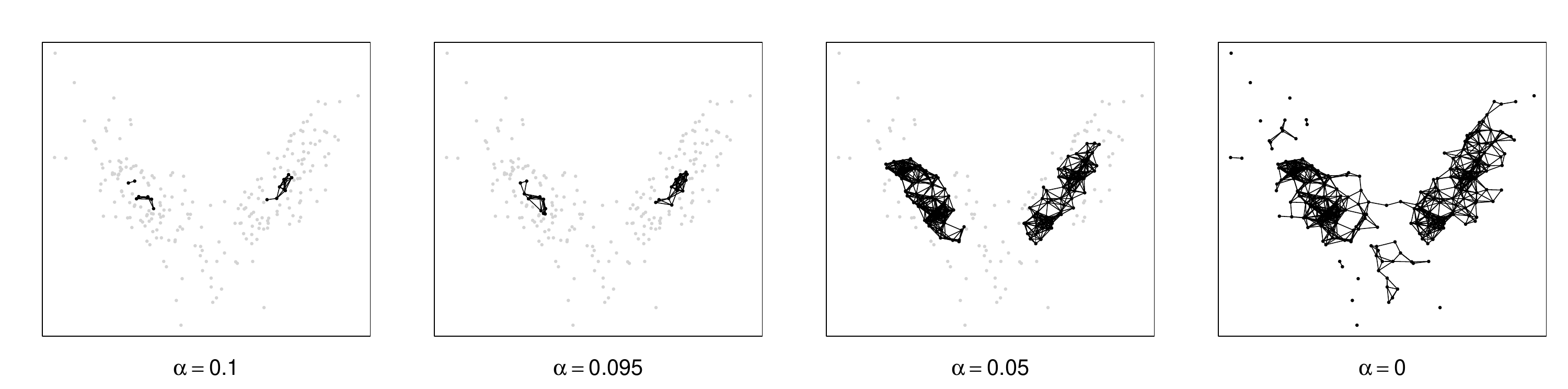}
 \caption{A point cloud generated from a mixture of two gaussians (top left), some realizations of its upper-star Rips filtration for $\delta=0.5$ and $\alpha \in \{0.1, 0.095, 0.05, 0\}$ (bottom, from left to right) and the associated persistent diagram (top right). }
 \label{fig:def3}
\end{figure}
 The persistence module that we consider here is the upper-star filtration of $\Prob\colon \INs\to \real$ restricted to a Rips graph. 
\begin{definition}[upper-star Rips filtration]
Given a finite point cloud $\Set_\obs$ from a metric space $(\INs, d)$ with a probability function $\Prob$ and a real value $\delta \in \real^+$, the \emph{upper-star Rips filtration} of $\Prob$, denoted $\mathcal{R}_\delta(\Set_\obs, \Prob)$, is the nested family of subgraphs of the Rips graph $R_\delta(\Set_\obs)$ defined as $\mathcal{R}_\delta(\Set_\obs, \Prob) = \left(R_\delta(\Set_\obs \cap \Prob^{-1}([\alpha, +\infty])\right)_{\alpha\in \overline{\real}}$.
Such a nested family of graphs gives rise to a persistence module 
\[\mathbf R_\delta(\Set_\obs, \Prob)=\left(H_0\left(R_\delta(\Set_\obs \cap \Prob^{-1}\left([\alpha, +\infty]\right))\right)\right)_{\alpha\in \overline{\real}}\]
and to its associated persistence diagram $D \mathbf R_\delta(\Set_\obs, \Prob)$. 
\end{definition}
This notion is illustrated in Figure \ref{fig:def3}. 

We define similarly the \emph{upper-star $\sigma$-Rips filtration} of $\Prob$, $\mathcal{R}_{\sigma(\cdot)}(\Set_\obs, \Prob)$ and the associated persistence module $\mathbf R_{\sigma(\cdot)}(\Set_\obs, \Prob)$ and persistent diagram $D \mathbf R_{\sigma(\cdot)}(\Set_\obs, \Prob)$.
In the next section, we give some tools to control the difference between $D \mathbf R_{\delta}(\Set_\obs, \Prob)$ and $D \mathbf R_{\sigma(\cdot)}(\Set_\obs, \Prob)$.

\subsection{Comparison of persistence diagrams for Rips graph and $\sigma$-Rips graph}
The bottleneck distance is an effective and natural proximity measure to compare two persistence diagrams.

\begin{definition}[bottleneck distance]
Given two multi-subsets $A_1,A_2$ of \;$\overline{\real}^2$ and a multi-bijection $\gamma: A_1 \rightarrow A_2$, the bottleneck distance $d^\infty_B(A_1, A_2)$ between $A_1$ and $A_2$ is the quantity:
\begin{align*}d^\infty_B(A_1, A_2) = \min_{\gamma:A_1 \rightarrow A_2} \max_{p \in A_1} \| p-\gamma(p) \|_\infty.
\end{align*}
\end{definition}

One can control the bottleneck distance between two persistence diagrams from upper star Rips filtration by comparing the evolution of the connected components along the filtration which can be track with the appearance level.

\begin{definition}[appearance level]
Given a finite point cloud $\Set_\obs = \{\obs_i\}_{i=1}^n$ from a metric space $(\INs, d)$ with a probability function $\Prob$ and $\delta$ such that $R_\delta(\Set_\obs)$ is connected. For two distinct points $(\obs_i, \obs_j)\in \Set_\obs^2$, we define the \emph{appearance level} $\alpha_\delta(\obs_i, \obs_j)$ as the highest level of the upper-star Rips filtration $\mathcal{R}_\delta(\Set_\obs, \Prob)$ at which $\obs_i$ and $\obs_j$ are in the same connected component:
\begin{equation*}
 \alpha_\delta(\obs_i, \obs_j) = 
 \max_{\gamma\in\mathcal{P}(\obs_i, \obs_j)}\min_{\obs\in \gamma}\Prob(\obs) 
\end{equation*}
where $\mathcal{P}(\obs_i, \obs_j)$ is the set of all paths\footnote{A path $\gamma$ in a graph $R$ is a sequence of vertices of $R$ where two consecutive vertices of $p$ are adjacent in $R$.} in $R_\delta(\Set_\obs)$ from the vertex $\obs_i$ to the vertex $\obs_j$.
\end{definition}
For example, in Figure \ref{fig:def3}, between $\alpha=0.1$ and $\alpha=0.095$, we see two connected components that are finally connected (on the cluster on the left). The appearance level of the corresponding points is a value between $0.095$ and $0.1$. 

We define similarly $\alpha_{\sigma(\cdot)}$ the appearance level for an upper-star $\sigma$-Rips filtration $\mathcal{R}_{\sigma(\cdot)}(\Set_\obs, \Prob)$.

Then, we are able to bound the bottleneck distance between the Rips graph and the $\sigma$-Rips graph.

\begin{theorem}
\label{main_theo}
Given a finite point cloud $\Set_\obs = \{\obs_i\}_{i=1}^n$ from a metric space $(\INs, d)$ with probability function $\Prob$. Let $R_{\delta}(\Set_\obs)$ be the Rips graph with parameter $\delta$, $R_{\sigma(\cdot)}(\Set_\obs)$ the $\sigma$-Rips graph with threshold function $\sigma$ and assume that they share the same connected components. Then, 

 \[ d_B^{\infty} \left(D\mathbf{R}_{\delta}(\Set_\obs,\Prob),D\mathbf{R}_{\sigma(\cdot)}(\Set_\obs,\Prob)\right) \leq \max_{(\obs_i,\obs_j)\in \Set_\obs^2 }\lvert \alpha_\delta(\obs_i,\obs_j)-\alpha_{\sigma(\cdot)}(\obs_i,\obs_j)\rvert,\]
 setting $\alpha_\delta(\obs_i,\obs_j)=\alpha_{\sigma(\cdot)}(\obs_i,\obs_j)=0$ if $\obs_i$ and $\obs_j$ are not in the same connected component. 
\end{theorem}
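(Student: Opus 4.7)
My plan is to build an explicit $\epsilon$-interleaving between the persistence modules $\mathbf R_\delta(\Set_\obs,\Prob)$ and $\mathbf R_{\sigma(\cdot)}(\Set_\obs,\Prob)$, with
\[
 \epsilon := \max_{(\obs_i,\obs_j)\in\Set_\obs^2}\lvert \alpha_\delta(\obs_i,\obs_j)-\alpha_{\sigma(\cdot)}(\obs_i,\obs_j)\rvert,
\]
and then to invoke the algebraic stability theorem, which states that an $\epsilon$-interleaving forces the bottleneck distance between the two associated persistence diagrams to be at most $\epsilon$. The intuition behind why an interleaving should exist is that both modules live over the same vertex set $\Set_\obs$ with the same function $\Prob$: the only thing that changes between them is the edge set, and the appearance levels $\alpha_\delta$ and $\alpha_{\sigma(\cdot)}$ are precisely the data recording when any two vertices first become connected.

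Concretely, for every $\beta\in\overline{\real}$, I would define
\[
 \phi_\beta \colon H_0\bigl(R_\delta(\Set_\obs\cap \Prob^{-1}([\beta,+\infty]))\bigr)\longrightarrow H_0\bigl(R_{\sigma(\cdot)}(\Set_\obs\cap \Prob^{-1}([\beta-\epsilon,+\infty]))\bigr)
\]
by sending the $R_\delta$-component of a vertex $\obs$ at level $\beta$ to the $R_{\sigma(\cdot)}$-component of the same vertex $\obs$ at level $\beta-\epsilon$ (well-defined on vertices since $\Prob(\obs)\geq \beta\geq \beta-\epsilon$). Well-definedness on $H_0$ is exactly the content of the appearance-level bound: if $\obs_i$ and $\obs_j$ are connected in $R_\delta$ at level $\beta$, then $\alpha_\delta(\obs_i,\obs_j)\geq \beta$, so by definition of $\epsilon$ we have $\alpha_{\sigma(\cdot)}(\obs_i,\obs_j)\geq \beta-\epsilon$, meaning they are connected in $R_{\sigma(\cdot)}$ at level $\beta-\epsilon$. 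I would define $\psi_\beta$ going the other way identically, and the two interleaving squares would commute because all four maps involved — the two compositions $\psi_{\beta-\epsilon}\circ\phi_\beta$ and $\phi_{\beta-\epsilon}\circ\psi_\beta$ and the two internal $2\epsilon$-shift maps — are induced by the identity on $\Set_\obs$.

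The hypothesis that the two graphs share the same connected components is what guarantees that the "points at infinity" of the two diagrams match perfectly: each such point corresponds to an ambient connected component and is born at the maximum of $\Prob$ on that component, data that depend only on the vertex set and on $\Prob$. Once this is in hand, the algebraic stability theorem gives the claimed inequality. The main obstacle I anticipate is not the construction itself but the bookkeeping around $H_0$ of disconnected graphs and the convention $\alpha(\obs_i,\obs_j)=0$ for vertices in distinct components; however, since the components coincide on both sides, the argument decomposes cleanly into one interleaving per component, and taking the maximum of the per-component bottleneck distances — each controlled by $\epsilon$ — yields the global bound.
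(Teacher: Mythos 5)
Your proposal is correct and follows essentially the same route as the paper: you construct the same shift maps $\phi_\beta,\psi_\beta$ on $H_0$ of the superlevel filtrations, justify well-definedness via exactly the same appearance-level inequality $\alpha_\delta(\obs_i,\obs_j)\geq\beta \Rightarrow \alpha_{\sigma(\cdot)}(\obs_i,\obs_j)\geq\beta-\epsilon$, observe that commutativity is automatic because all maps are induced by inclusions of connected components, and conclude with the algebraic stability theorem of \cite{Chazal09b}, which is precisely the paper's Lemma \ref{lem:interleaved}. Your added remarks on the points at infinity and the per-component decomposition are sound bookkeeping that the paper leaves implicit, but they do not change the argument.
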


The proof stands in Appendix \ref{App:proof:thm}. It relies on \cite{Chazal09b} and their notion of $\epsilon$-interleaving. The main idea is that one can control the birth and the death of connected components along the filtration by controlling the changes in appearance levels.

This theorem means that, when switching from the metric distance $d$ to a closed (possibly) non-metric distance $\hat d$ defined in \eqref{dchapeau} (and then from the Rips graph to the $\sigma$-Rips graph), the dendrogram induced by the upper star Rips graph is mostly the same during the persistence process.

\section{Proper topological regions and their use in active learning}
\label{sec:method}

In this section, we start by defining the {proper topological regions}. We then use the proper topological regions for zero-shot learning and for pool-based active learning.

\subsection{Proper topological regions}
The main tool used in our method is the notion of topological regions that are based on the algorithm \tomato{} \citep{Chazal13}. 
\tomato\ is a clustering method that uses the hill climbing algorithm on a Rips graph $R_\delta(\Set_\obs)$ along with a merging rule on the Rips graph's persistence. It depends on a merging hyperparameter $\tau\geq 0$ which drives the granularity: it keeps only clusters with prominence higher than $\tau$. 
It can be easily adapted to work with a $\sigma$-Rips graph $R_{\sigma(\cdot)}(\Set_\obs)$ by considering the non-metric space $(\Set_{\mathbf{x}},\hat{d})$ with $\hat d$ introduced in Eq. \eqref{dchapeau}.
The topological regions correspond to the clusters given by \tomato\ for a $\sigma$-Rips graph, defined formally as follows. 
\begin{definition}
The \emph{topological regions} of a sample set $\Set_{\mathbf x}$ coming from an unknown marginal distribution $\Prob$ and with parameters $(\delta,r,t,\tau)$ are the clusters given by the clustering 
\[\TR_{\delta,r,t,\tau}^{\Set_\obs, \Prob} =\tomato_{\tau}\left(R_{\sigma(\cdot;\delta,r,t)}(\Set_\obs),\Prob\right).\]
When the set of covariates $\Set_\obs$, the underlying density $\Prob$, and the parameters are understood, we will simply denote $\TR$.
\end{definition}

For $\TR\colon \Set_\obs\to \{1,\ldots,k\}$ a clustering into $k$ topological regions of $\Set_\obs$, we denote $\mathcal L_{\TR}^\Prob$ the labeling function that propagates, in a given topological region, the label of the sample with the highest density with respect to $\mathbb{P}$:
\begin{align}
 \mathcal L^\Prob_{\TR} \colon \Set_\obs &\to \OUTs\nonumber\\
 \mathbf{x}_i &\mapsto \mathcal{O}\left(\underset{\obs_j : \TR(\obs_j) = \TR(\obs_i)}{\argmax}\;\Prob(\obs_j)\right).\label{eq:labelingFunction}
 \end{align}

If one could have access to the labeled data, we define the \emph{Purity Size function} $\PS$ as the objective function that considers the labeling error when propagating the labels inside the topological regions with $\mathcal{L}_{\TR}^\Prob$, penalized by the number of topological regions $k$ in $\TR$:
$$\PS\left(\Set,\Prob,\TR\right) = \left [ \frac{k}{n} + \frac{1}{n} \sum_{i = 1}^n 1_{\mathcal L_{\TR}^\Prob(\obs_i) \neq y_i} \right ] \in [0, 1].$$

Then, we introduce the notion of proper topological regions, that will be the key element in our method.

\begin{definition}
The \emph{proper topological regions} of a sample set $\Set_{\mathbf x}$ coming from an unknown marginal distribution $\Prob$ are the topological regions of $\TR^{\Set_\obs,\Prob}_{\delta^*, r^*, t^*, \tau^*}$ where
\begin{align}
 (\delta^*, r^*, t^*, \tau^*) =& \underset{(\delta,r,t, \tau)}{\argmin} \left\{ \PS\left(\Set,\Prob,\TR_{\delta,r,t,\tau}^{\Set_\obs, \Prob} \right) \right\}.\label{eq:opt}
 \end{align}
\end{definition}

However, in our active learning context, we need to use an unsupervised objective function. We consider a trade-off between the Silhouette score\footnote{Other potential unsupervised criteria typically used to assess the clustering quality are discussed in Appendix \ref{app:uns_crit}, but we have observed on an empirical study the benefit of the Silhouette.} and the coverage compactness of a clustering $\TR$ of $\Set_\obs$ into $k$ topological regions $\{R_1,\dots R_k\}$: for $1\leq q\leq k$, let $\pi_q$ be the cardinal of the topological region $R_q=\{\obs \in \Set_\obs: \TR(\obs) = q \}$. For $\lambda \in \mathbb{R}^+$, we define
\begin{align}
 \textit{SilSize}_\lambda(\Set_\obs, \TR) &= \left [\frac{1}{k} \sum_{q=1}^k \frac{1}{\pi_q} \sum_{\obs \in R_q} s_{il}(\obs)\right ] - \lambda \frac{k}{n}\in \left[-1-\lambda, 1-\frac{\lambda}{n}\right], \label{silsize}\\
 \text{with} \hspace{1cm} s_{il}(\obs) &= \frac{\nu^c(\obs) - \nu(\obs)}{\max(\nu(\obs), \nu^c(\obs))}\nonumber
 \end{align}
 where, for all $q$ and all $\obs \in R_q$, $\nu(\obs)$ is the average distance of sample $\obs$ within its cluster $R_q$ and $\nu^c(\obs)$ is the average distance of sample $\obs$ to his nearest neighbor cluster:
 \begin{align*}
 \nu(\obs) &= \frac{1}{\pi_q-1} \sum_{\obs' \in R_q} d(\obs, \obs'),
 &\nu^c(\obs) = \min_{q' \neq q} \frac{1}{\lvert C_{q'} \rvert} \sum_{\obs' \in \mathcal{C}_{q'}} d(\obs, \obs').
 \end{align*}

 Note that the trade-off parameter $\lambda$ in \eqref{silsize} is key in uncovering the proper topological regions of the sample set $\Set_\obs$. High values of $\lambda$ penalize the coverage compactness, resulting in partitions with a high degree of agglomeration, i.e. fewer topological regions with large cardinals. However, an additional way to control the labeling propagation error term of the Purity Size objective in an unsupervised setting is to control the size distribution of groups in the resulting partition. Conversely, lower $\lambda$ values result in highly fragmented partitions with many groups with small cardinals, and the Silhouette score tends to converge to graphs with a single non-singleton connected component and many singletons.

Thus, the optimization problem \eqref{eq:opt} is approximated by the following: 
\begin{align}
 \underset{(\delta,r,t,\tau)}{\operatorname{argmax}} \left\{ \textit{SilSize}_\lambda\left(\Set_\obs, \TR_{\delta,r,t,\tau}^{\Set_\obs, \Prob}\right)\right\}.
\end{align}

Unfortunately, this optimization problem is too costly, because the set of parameters leads to running the \tomato\ function many times. So instead, we propose to solve the following proxy, which is running \tomato\ only once:
\begin{align}
 (\delta^\sharp, r^\sharp, t^\sharp) =& \underset{(\delta,r,t)}{\operatorname{argmax}} \left\{ \textit{SilSize}_\lambda(\Set_\obs,R_{\sigma(\cdot;\delta,r,t)}(\Set_\obs))\right\}\label{optim:sub1}\\
 \tau^\sharp=& \underset{\tau}{\operatorname{argmax}} \left\{ \textit{SilSize}_\lambda\left(\Set_\obs, \TR_{\delta^\sharp,r^\sharp,t^\sharp,\tau}^{\Set_\obs, \Prob}\right)\right\}\label{optim:sub2}
\end{align}
with a slight abuse of notations in \eqref{optim:sub1} between the Rips graph $R_{\sigma(\cdot;\delta,r,t)}(\Set_\obs)$ and its connected components seen as a clustering. 
The best hyperparameters $a^\sharp, r^\sharp, t^\sharp$ for the silhouette of the $\sigma$-Rips graph are then used to find the best hyperparameter $\tau^\sharp$ for the \tomato{} algorithm. 

\begin{algorithm}[t]
\begin{center}
\caption{Optimization procedure for PTR}\label{algo:opt}
\begin{algorithmic}[1]
\Require $\Set_\obs:= \{\obs_i\}_{i=1}^n$, $d: \mathcal{X} \times \mathcal{X} \to [0,\infty)$, $s$ the step size for the linear search, and $l$ the number of trials for the optimization strategy. 
\State Initialize $\lambda = s$.
\State Compute the density estimator $\hat{\Prob}$ with \eqref{eq:1} based on $d$ and $\Set_\obs$.
\State Optimize the problem~\eqref{optim:sub1} for $l$ trials, and return $(\hat \delta, \hat r, \hat t)$.
\State Build the $\sigma$-Rips graph $R_{\sigma(\cdot; \hat \delta, \hat r, \hat t)}(\Set_\obs)$.
\While{$R_{\sigma(\cdot; \hat \delta, \hat r, \hat t)}(\Set_\obs)$ is not a \textit{degenerate graph}\footnotemark
}
\State Update $\lambda \longleftarrow \lambda + s$.
\State Optimize the problem~\eqref{optim:sub1} for $l$ trials, updating $ \hat \delta, \hat r, \hat t$.
\State Build the $\sigma$-Rips graph $R_{\sigma(\cdot; \hat \delta, \hat r, \hat t)}(\Set_\obs)$.
\EndWhile
\State Update $\lambda \longleftarrow \lambda - s$.
\State Optimize problem~\eqref{optim:sub2} for $l$ trials
\State {\bfseries Output:} parameters $\hat \delta, \hat r, \hat t,\hat\tau$ and the corresponding $\widehat{\PTR}$.
\end{algorithmic}
\end{center}
\end{algorithm}
\footnotetext{A graph is degenerate if the sizes of the connected components are imbalanced (we do not allow very small connected components).}

Since the underlying density is usually unknown, we need to estimate it from the data. For that purpose, we use the distance to a measure \cite{Chazal13}, which computes the root-mean-squared distance to the $\ell$ nearest neighbors of the considered query point: for all $i \in \{1,\dots,n\}$, 
\begin{align}
 \label{eq:1}
 \hat{\Prob}(\obs_i) = \left(\frac1\ell\sum_{j=1}^n d(\obs_i,\obs_j)^2 1_{\obs_j \text{ is a } \ell\text{-nearest neighbors of } \obs_i} \right)^{-1/2}.
\end{align}

The whole procedure used to approximate the proper topological regions is data-driven using the unlabeled set $\Set_\obs$, and we will denote in the following $\widehat\PTR$ the corresponding estimated proper topological regions with parameters $( \hat \delta, \hat r, \hat t, \hat \tau)$. 
We describe in Algorithm~\ref{algo:opt} a two-stage black-box optimization scheme to estimate the $\sigma$-Rips graph parameters $(\delta^*,r^*,t^*)$ by $( \hat \delta, \hat r, \hat t)$, and the merging parameter $\tau^*$ by $ \hat \tau$, solution to our optimization problem given in Eq.~\eqref{eq:opt}, for the proper topological regions of $\Set$. 

As we are extending \tomato\ to $\sigma$-Rips graph, proper topological regions enjoy the same theoretical guarantees as the topological region given by \tomato{} applied to the usual Rips graph.
More precisely, under some topological assumptions on the persistence diagrams, there is a range of values of $\tau$ such that the number of topological regions output by $\tomato_\tau(R_\delta(\mathcal{S}_\mathbf{x}),\Prob)$ is equal to the number of peaks (i.e., local maximum) of $\Prob$ with prominence at least $\tau$ in $D\Prob$ with high probability with respect to $n$. Moreover, each of these topological regions contain a neighborhood of the basins of attraction\footnote{The \emph{basin of attraction} of $\Prob\colon \mathcal X\to \real$ of a peak $p$ of $\Prob$ corresponds to all the points of $\mathcal X$ flowing into $p$ along the flow defined by the gradient vector field of $\Prob$.} of the corresponding peak. In this context, Theorem \ref{main_theo} tells us that, under reasonable conditions on the threshold function $\sigma$, we get about the same persistence diagram when considering the $\sigma$-Rips graph, and thus we derive the same kind of theoretical guarantees when applying \tomato\ with a $\sigma$-Rips graph.
Those results are summarized in Appendix \ref{app:tomato}.

\begin{algorithm}[t]
\begin{center}
\caption{Zero-shot learning based on proper topological regions}\label{algo:zeroshot}
\begin{algorithmic}[1]
\Require $\Set_\obs:= \{\obs_i\}_{i=1}^n$, oracle $\mathcal{O}$, budget $\mathcal{B}$, $\hat{\Prob}$, and proper topological regions $\widehat\PTR$. 
\State Detect the $\mathcal{B}$ largest proper topological regions $R_1,\ldots,R_\mathcal{B}$ of $\widehat\PTR$.
\State Set $\mathcal{S}_\obs^0 = \cup_{q=1}^{\mathcal{B}} R_q$ \label{algo:zeroshot:l2}
\For{$\mathbf{x}_i \in \mathcal{S}^0_\obs$}
\State Label the corresponding points using the oracle $\mathcal{O}$: $\hat y_i = \mathcal L_{\widehat\PTR}^{\hat{\Prob}}(\mathbf{x}_i)$.
\EndFor
\State {\bfseries Output:} $\hat{\mathcal{S}}^0 = (\mathbf x_i,\hat y_i)_{\mathbf{x}_i \in \mathcal{S}_\obs^0}$
\end{algorithmic}
\end{center}
\end{algorithm}

\begin{figure}
 \includegraphics[scale=0.4]{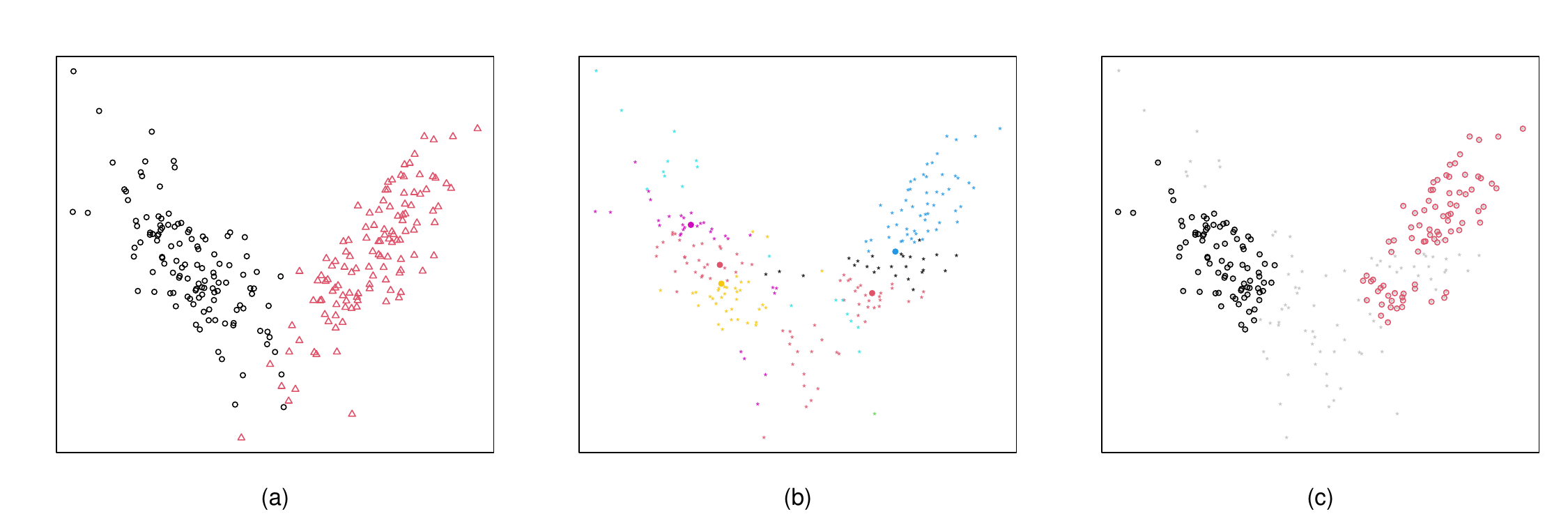}
 \caption{Illustration of Algorithm \ref{algo:zeroshot} (a) data with the oracle for the labels. (b) clustering given by tomato, describing the proper topological regions with the estimated parameters given by Algorithm \ref{algo:opt} (c) output of Algorithm \ref{algo:zeroshot}, where budget = 5, and with propagation. Not all points are labelled, and not all labels are sure. }
 \label{fig:algo2}
\end{figure}

\subsection{Proposed meta-learning strategy}
In this section, we introduce two strategies based on the proper topological regions found by Algorithm~\ref{algo:opt}. First, they are used in a zero-shot learning algorithm and second, in a pool-based strategy in a meta fashion, independently from the estimator and the strategy. To do so, we propose to use the label propagation scheme on proper topological regions in order to increase the sample size for training in a small budget scenario with a fixed number of calls to the oracle, as proposed in \cite{Fab18,citovsky2021batch}.

\paragraph{Zero-shot learning} 
We observe the unlabelled set $\Set_\obs$, and we have access to the proper topological regions $\widehat{\PTR}$ estimated by Algorithm~\ref{algo:opt}; to an oracle to give few labels (a budget $\mathcal{B}$ is considered); and the density estimation $\hat{\Prob}$. The strategy is the following: we label the $\mathcal{B}$ largest proper topological regions $R_1,R_2,\dots,R_{\mathcal{B}}$ using the labelling function $\mathcal{L}_{\widehat{\PTR}}^{\hat{\Prob}}$ defined in Eq. \eqref{eq:labelingFunction}: we ask to the oracle $\mathcal{B}$ points (the one with highest density in each $R_q$), and we then label $\sum_{q=1}^\mathcal{B} \lvert R_q \rvert$ points by propagating in each topological region. 
We denote by $\hat{\mathcal{S}}^0$ this first set of labeled points, which includes true labels obtained directly from the oracle, and estimated labels while diffusing the true labels to the topological regions. This procedure is summarized in Algorithm~\ref{algo:zeroshot} and illustrated in Figure \ref{fig:algo2}. The benefit to use proper topological regions instead of any clustering method is in details. No structure is assumed, as in k-means for example where clusters have a spherical shape. Here, only the topology is important, thus the algorithm can retrieve connected components even with an ambiguous shape. Moreover, fine hyperparameter tuning in \tomato{} allows to merge or distinguish between regions in a precise way (see Results \ref{R:1} and \ref{R:2} in Appendix \ref{app:tomato}).

\begin{algorithm}[t]
\begin{center}
\caption{Pool-based active learning on proper topological regions (PAL$_{\PTR}$)}\label{algo:main}
\begin{algorithmic}[1]
\Require $\Set_\obs:= \{\obs_i\}_{i=1}^n$, oracle $\mathcal{O}$, budget $\mathcal{B}$, $\hat{\Prob}$, proper topological regions $\widehat\PTR = \{R_1,\ldots,R_k\}$, active learner agent $h_{st}(\Set_\obs, \mathcal{B})$ with an underlying pool-based strategy $st$, and $r$ the active training rounds. 
\State Compute $\hat\Set^0$ using Algorithm 2
\For{$u = 0, \ldots, r-1$}
 \State Train the active learner agent $h_{st}(\hat\Set^u, \mathcal{B})$.
 \State Ask a set $S_\obs$ from $h_{st}$ of size $\mathcal{B}$.
\State $\Set^{u+1}_\obs = \cup_{R_q \cap S_\obs \neq \emptyset} R_q$
\If{$\tilde{\mathcal{B}} = \lvert\{q \mid R_q \cap S_\obs \neq \emptyset\}\rvert < \mathcal{B}$}
\State Detect the $\mathcal{B} - \tilde{\mathcal{B}}$ unlabeled largest topological regions without any points which do not intersect $S_\mathbf{x}$ and add them to $\Set^{u+1}_\obs$
\EndIf
\For{$\mathbf{x}_i \in \mathcal{S}^{u+1}_\obs$}
\State Label the corresponding points using the oracle $\mathcal{O}$: $\hat y_i = \mathcal L_{\widehat\PTR}^{\hat{\Prob}}(\mathbf{x}_i)$.
\EndFor
\State $\hat{\Set}^{u+1} = \hat{\Set}^{u} \cup \left\{(\obs_i, \hat{y}_i)\mid{\obs_i \in \mathcal{S}^{u+1}_\obs}\right\}$
\EndFor
\State {\bfseries Output:} the labeled set $\hat{\Set}^{r}$
\end{algorithmic}
\end{center}
\end{algorithm}

\paragraph{Meta-approach for training pool-based active learning} 
The idea is again to diffuse the labels asked to the oracle to the proper topological regions to get more (pseudo)-labeled points. 
The unlabeled sample set $\Set_\obs$, the oracle $\mathcal{O}$, the budget $\mathcal{B}$, the density estimation $\hat{\Prob}$ and the proper topological regions $\widehat{\PTR} = \{R_1,\ldots, R_k\}$ estimated by Algorithm \ref{algo:opt} are common inputs for active learning techniques.
Additionally, a pool-based active learning technique $h_{st}(\Set_\obs, \mathcal{B})$ is also provided as input.
The algorithm then performs $r$ rounds of active learning. Within each round, the active learner agent is asked to detect $\mathcal{B}$ points, which are defining at most $\mathcal{B}$ topological regions. If two points detected by the active learner agent belong to the same topological region, we use the extra budget to label the largest topological regions without any detected points. Then, the oracle is asked to label $\mathcal{B}$ unlabeled examples that correspond to the points of high density in each considered topological region. The labeled set $\hat{\Set}^{r}$ returned by the algorithm has $r \times \mathcal{B}$ labels and many pseudo-labels, given by the label propagation with $\mathcal L_{\widehat\PTR}^{\hat{\Prob}}$ to increase the size of the training set during each round of active learning.

Remark that the choice for the extra budget to label the largest unlabelled proper topological regions is not driven by an active learner agent. Good results have been observed in Section \ref{sec:experiments}, but one can think of different ways to use the extra budget, such as, for example, asking the learner for more examples to label. Algorithm~\ref{algo:main} describes this meta-approach.

\begin{table}[t]
 \centering
 \caption{Dataset statistics: $n_{\text{train}}$ is the size of the training set, $n_{\text{test}}$ is the size of the test set, $m$ is the number of features, $c$ is the number of classes, and imbalance corresponds to the class imbalance ratio.}
 \label{tab:stats}
 \begin{tabular}{cccccc}
 Dataset & $n_{\text{train}}$& $n_{\text{test}}$ & $m$ & $c$ & imbalance \\ \hline
 protein \citep{protein}& $756$ & $324$& $77$ & $8$ & $0.70$ \\ \midrule
 banknote \citep{pmlb21}& $943$ & $405$ & $4$ & $2$ & $0.83$ \\ \midrule
 coil-20 \citep{coil20}& $1008$& $432$ & $1024$ & $20$ & $1.00$ \\ \midrule
 isolet \citep{isolet}& $4366$& $1872$ & $617$ & $26$ & $0.99$ \\ \midrule
 pendigits \citep{pmlb21}& $7694$& $3298$ & $16$ & $10$ & $0.92$ \\ \midrule
 nursery \citep{pmlb21}& $9070$ & $3888$& $8$ & $4$ & $0.09$ \\ 
 \bottomrule
 \end{tabular}
\end{table}

\section{Empirical results}
\label{sec:experiments}
We conduct a number of experiments aimed at evaluating how the proposed approach can identify valuable examples to be labeled for learning. To this end, we consider two scenarios for the identification of an initial training set to be labeled from an unlabeled set, and the increase of the training sample size during the rounds with active learning while operating under a low-budget regime.

We carry out experiments on data collections that are frequently used in active learning. Table \ref{tab:stats} presents statistics of these datasets.

For the metric function, $d$, we consider the Euclidean distance, and we choose the Tree-structured Parzen Estimator (TPE) \citep{Berg11} for the optimization procedure of Algorithm~\ref{algo:opt}, with a number of trials $l = 500$, and a step size $s$ of $0.01$ for the line search procedure. We estimate $\Prob$ using Eq. \eqref{eq:1} with the distance to measure based on the $\ell$ nearest neighbors with $\ell$ the sample size, if smaller than 2000, and 2000 elsewhere. In all our experiments, we use the random forest classifier \citep{ho95} as the base estimator for the different strategies with default parameters, we also consider several budgets $\mathcal{B} \in \{3, 10, 20\}$, and 20 stratified random splits, with $70\%$ of the data in the training set and $30\%$ in the test set. We report the balanced classification accuracy \citep{Bro10} over all experiments. Regarding the data preprocessing, we drop sample duplicates and samples with null values. Then we apply a standard min-max normalization to the filtered datasets.

\subsection{Rips graph vs $\sigma$-Rips graph} \label{sec:exp:graphs}
To validate our hypothesis of a density-aware threshold given by Eq. \eqref{sigma_delta_def} for class similarity and to motivate our generalization of the Rips graph to express this notion, we present a comparison study in Figure~\ref{fig:graphs_comp} between the Rips and the $\sigma$-Rips graphs on the protein dataset. The results for the other collections are shown in Figure~\ref{fig:all_datasets_graphs} in Appendix \ref{apdx:graphs_comp}.

\begin{figure}[t]
 \centering
 \caption{Comparison study between the Rips graph and the $\sigma$-Rips graph on the protein dataset: the Purity Size score is reported for each minimizer.}
 \label{fig:graphs_comp}
 \includegraphics[width=\textwidth, trim = 0 1.3cm 0 2cm, clip=TRUE]{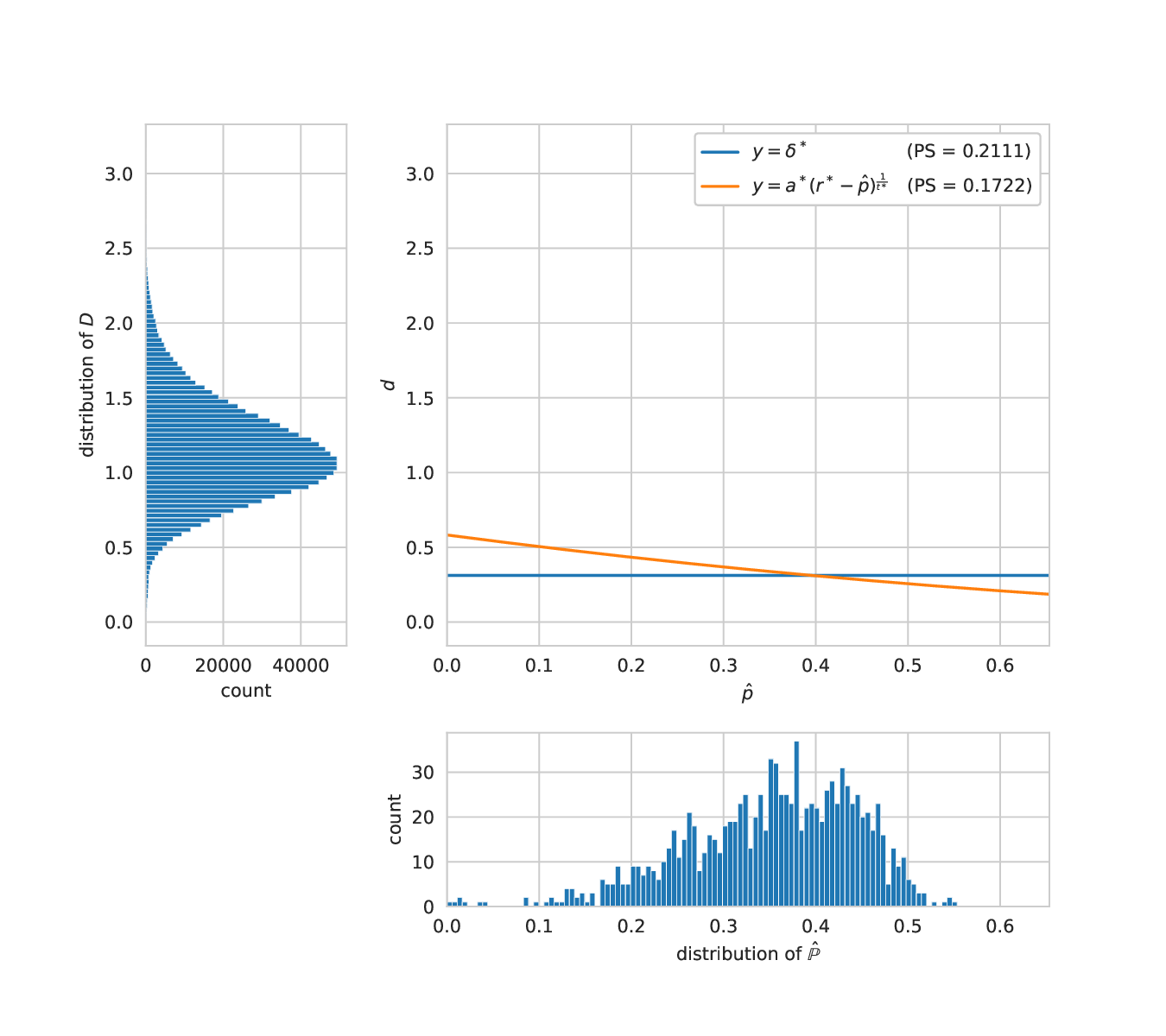}
\end{figure}

The plot represents the threshold of the best Rips graph and $\sigma$-Rips graph in minimizing the Purity Size cost function. The Rips graph's threshold (in blue) is a constant presented as a horizontal line in the plot. 

In this figure, we also include two more side plots which are the distribution of the dataset's density estimation $\hat{\Prob}$ under the x-axis and the distance matrix $D$'s distribution of Euclidean distances on the left of the y-axis. Note that from the definitions of the Rips graph (Def.~\ref{def:rips}), and the $\sigma$-Rips graph (Def.~\ref{def:sigma_rips}), threshold values larger than the maximum distance lead to a full graph.

From this figure, it comes out that the optimal threshold rule's values found in the hypothesis class of the $\sigma$-Rips graph with our proposed threshold function $\sigma(\cdot;( \delta,r,t,\tau))$ given in Eq. ~\eqref{sigma_delta_def} are negatively correlated to the estimation density $\hat{\Prob}$. We also observe that the best $\sigma$-Rips graph achieves better Purity Size than the best Rips graph. These observations are consistent with other datasets reported in the appendix, except for coil-20 and nursery collection, where they have the same performance. 
These findings provide empirical evidence for our hypothesis that class similarity is a density-aware measure. It also supports our choice of $\sigma(\cdot;(\delta,r,t,\tau))$ given in Eq.~\eqref{sigma_delta_def} as an appropriate threshold function to generalize the Rips graph.

\subsection{Cold-start results}
For the cold-start experiments, we consider the following unsupervised approaches to compare with our approach:
\begin{itemize}
 \item \textbf{K-Means clustering (KM).} The K-Means algorithm \citep{Lloyd82} partitions a collection of examples into $K$ clusters by minimizing the sum of squared distances to the cluster centers. It has been used for active learning in \citet{zhu08}, to generate the initial training set by labeling the closest sample to each centroid. 
 \item \textbf{K-Means clustering with model examples (KM+ME).} A variant of KM proposed in \citet{Kang04} adds artificial samples from the centroids, named \textit{model examples}, to the initial training set. This approach leads to an initial training set twice as large as the one created using K-Means.
 \item \textbf{K-Medoids clustering (Km).} The K-Medoids algorithm \citep{Kauf90} is very similar to K-Means except that it uses the actual samples for centers, namely the medoids, as the center of each cluster. These medoids are then used to form the initial training set in active learning.
 \item \textbf{Agglomerative Hierarchical Clustering (AHC).} Agglomerative hierarchical clustering \citep{Voor85} is a bottom-up clustering approach that builds a hierarchy of clusters. Initially, each sample represents a singleton cluster. Then, the algorithm recursively merges the closest clusters using a \textit{linkage function} (here the Ward linkage is used) until one cluster is left. This process is usually presented in a dendogram, where each level refers to a merge in the algorithm. AHC has been used for active learning in \citet{Dasg08} by pruning the dendogram at a certain level to obtain clusters, then similar to the strategy used with K-Means, selecting the closest samples to the clusters centroids to generate the initial training set. 
 \item \textbf{Furthest-First-Traversal (FFT).} The furthest-first traversal of a sample set is a sequence of a selected examples, where the first example is chosen arbitrarily, and each subsequent example in the chain is placed as far away from the set of previously chosen examples as possible. The resulting sequence is then used as the initial training set for active learning as in \citet{Baram04}.
 \item \textbf{Affinity Propagation Clustering (APC).} Affinity propagation is a clustering algorithm designed to find \textit{exemplars} of the sample set which are representative of clusters. It simultaneously considers all the sample set as possible \textit{exemplars} and uses the message-passing procedure to converge to a relevant set of \textit{exemplars}. The \textit{exemplars} found are then used as an initial training set for active learning \citep{Hu10}.
\end{itemize}
Our meta-approach for zero-shot learning is called PTR, where the $\sigma$-Rips graph is obtained by Algorithm \ref{algo:opt}. The results for Random Selection (RS) strategy, competitors, and our method PTR are shown in Table~\ref{tab:res1} over all collections for a budget $\mathcal{B} = 10$. Note that we do propagation within clusters detected by \tomato{} for our approach, but not for competitors, for which we consider $\mathcal{B}$ clusters. 

\begin{table}[t]
 \caption{Average balanced classification accuracy (in \%) and standard deviation of random forest classifier with the initial training set obtained from different methods over 20 stratified random splits for a budget $\mathcal{B} = 10$. $^\uparrow/^\downarrow$ indicate statistically significantly better/worse performance than Random Selection RS, according to a Wilcoxon rank sum test with $p < 0.05$ \citep{Lehman75}.}
 \label{tab:res1}
 \centering
 \begin{tabular}{llllllllll}
 \toprule
 Data & RS & KM & KM+ME & Km & AHC & FFT & APC & PTR\\
 \midrule
 {protein} & $28.2$ & $30.6^\uparrow$ & $31.4^\uparrow$ & $29.3^\uparrow$ & $31.6^\uparrow$ & $21.8$ & $28.8$ & $\mathbf{40.5}^\uparrow$ \\
 & $(3.2)$ & $(4.6)$ & $(4.5)$ & $(4.4)$ & $(3.7)$ & $(3.8)$ & $(3.4)$ & $(3.9)$ \\
 banknote & $79.9$ & $85.2^\uparrow$ & $86.8^\uparrow$ & $87.6^\uparrow$ & $85.6^\uparrow$ & $70.6^\downarrow$ & $82.4$ & $\mathbf{88.7}^\uparrow$ \\
 & $(9.9)$ & $(5.7)$ & $(4.8)$ & $(3.3)$ & $(5.0)$ & $(5.3)$ & $(6.9)$ & $(4.4)$ \\
 coil-20 & $29.0$ & $36.7^\uparrow$ & $38.2^\uparrow$ & $32.9^\uparrow$ & $36.0^\uparrow$ & $18.6^\downarrow$ & $27.2$ & $\mathbf{44.2}^\uparrow$\\
 & $(5.7)$ & $(4.2)$ & $(2.7)$ & $(5.1)$ & $(3.7)$ & $(3.4)$ & $(4.8)$ & $(2.4)$ \\
 isolet & $13.8$ & $22.3^\uparrow$ & $\mathbf{27.6}^\uparrow$ & $07.1^\downarrow$ & $23.3^\uparrow$ & $16.5^\uparrow$ & $15.4$ & $27.5^\uparrow$ \\
 & $(2.3)$ & $(1.6)$ & $(1.6)$ & $(1.9)$ & $(1.8)$ & $(1.7)$ & $(3.2)$ & $(2.8)$ \\
 pendigits & $37.4$ & $62.5^\uparrow$ & $65.6^\uparrow$ & $53.9^\uparrow$ & $61.4^\uparrow$ & $27.2^\downarrow$ & $38.3$ & $\mathbf{80.1}^\uparrow$ \\
 & $(7.2)$ & $(3.5)$ & $(2.3)$ & $(5.2)$ & $(1.9)$ & $(4.9)$ & $(8.2)$ & $(2.6)$ \\
 nursery & $42.7$ & $44.5$ & $\mathbf{49.3}^\uparrow$ & $28.4^\downarrow$ & $44.9$ & $39.1$ & $45.1$ & $46.5$ \\
 & $(7.2)$ & $(5.7)$ & $(4.0)$ & $(1.3)$ & $(7.2)$ & $(3.5)$ & $(6.7)$ & $(6.0)$ \\
 \bottomrule
 \end{tabular}
\end{table}

Except for the unbalanced dataset nursery, PTR consistently outperforms the random selection method, which has been demonstrated to be very difficult to surpass in different studies. This demonstrates that a preferable starting point for pool-based active learning procedures than random selection is to use the biggest proper topological regions discovered by Algorithm~\ref{algo:opt} as an initial training set (Line~\ref{algo:zeroshot:l2} of Algorithm~\ref{algo:zeroshot}). Furthermore, when compared to baseline approaches that are exclusively created to address the cold-start problem in active learning, our meta-approach exhibits very competitive results on different datasets. APC is equivalent to RS, while FFT and Km may have worst performance than RS in some settings. KM, KM+ME and AHL are better than RS, but we get the best results on four datasets out of the 6. We present further results for budgets $\mathcal{B}$ equal to 3 and 20 in Table~\ref{tab:res} in Appendix \ref{apdx:cold_start_comp}, with similar conclusions.

\subsection{Active learning results}
In this section, we present the results of our meta-approach for pool-based active learning strategies (described in Algorithm~\ref{algo:main} and denoted PAL$_{\PTR}$).

For the active learning experiments and following results from \citet{simoni19}, we only consider the comparison against the Random Selection strategy (RS) as a cold-start strategy, as it outperforms many recent strategies in active learning in small-budget scenarios. We compare our meta-approach and RS strategy for different pool-based active learning strategies, namely the uncertainty sampling query, the entropy sampling query, and the margin sampling query strategies \citep{modAL18}. 

Figure \ref{fig:all_10_uncertainty} depicts the results corresponding to the uncertainty sampling query strategy with budget $\mathcal B=10$ and a subfigure for each dataset made up of error bar plots that show the average balanced accuracy and standard deviation across the splits for all active learning rounds. Results for other strategies and budgets are shown in Figures~\ref{fig:protein_banknote}, \ref{fig:coil-20_isolet} and \ref{fig:pendigits_nursery} in Appendix \ref{Fig:poolBased}, the plots are organized so that each row and column represents a certain budget and active learning technique, but similar conclusions can be drawn. 

The results show that, as compared to the use of the random selection technique, all of the pool-based active learning strategies that were taken into consideration gain significantly from our method. Only in the nursery dataset do we not observe a gain. The primary cause is the nursery's significant class imbalance. In Algorithm~\ref{algo:main}, we decide to put the increase in training sample size ahead of class discovery or class ratio, which may help us understand the current class imbalance. These findings indicate that, when training with highly class-imbalanced datasets, various sample criteria of PTR in Algorithm~\ref{algo:main} should be taken into account in addition to only picking the largest ones.

\begin{figure}[t!]
 \centering
 \caption{Average balanced classification accuracy and standard deviation of pool-based active learning with the uncertainty strategy and budget $\mathcal B=10$ on protein dataset, using random forest estimator over 20 stratified random splits.}
 \label{fig:all_10_uncertainty}
 \begin{subfigure}[b]{0.49\textwidth}
 \centering
 \includegraphics[width=\textwidth]{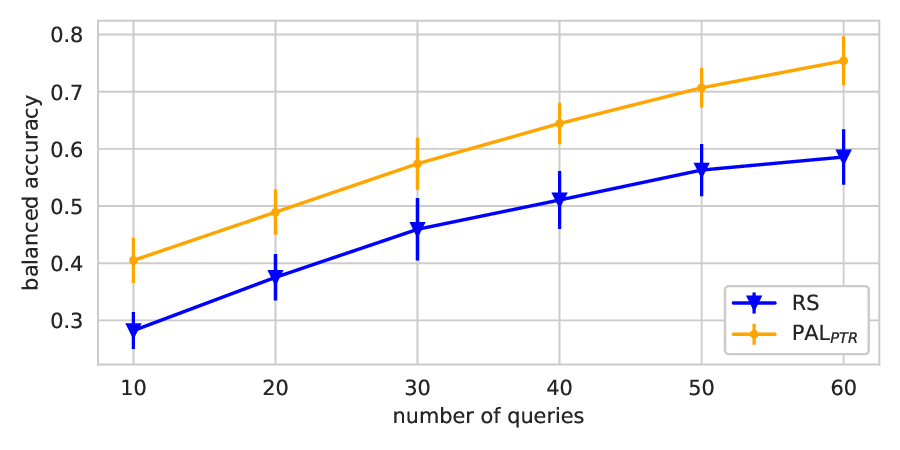}
 \caption{protein}
 \end{subfigure}
 \begin{subfigure}[b]{0.49\textwidth}
 \centering
 \includegraphics[width=\textwidth]{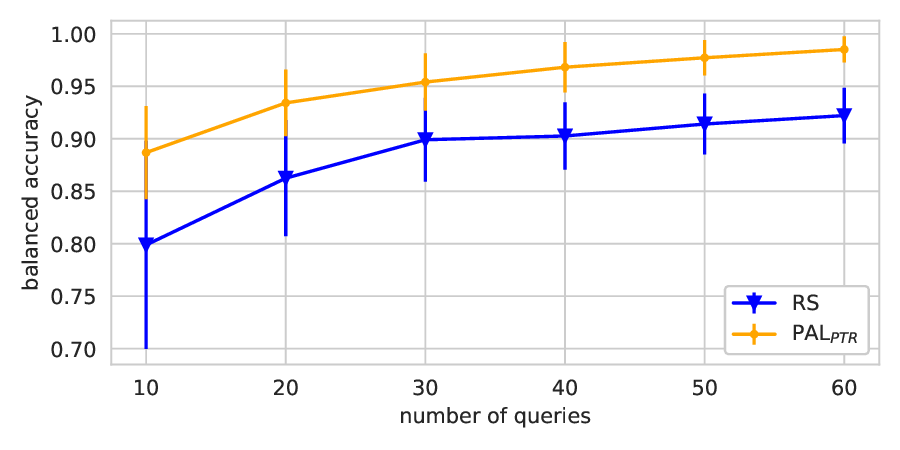}
 \caption{banknote}
 \end{subfigure}
 \begin{subfigure}[b]{0.49\textwidth}
 \centering
 \includegraphics[width=\textwidth]{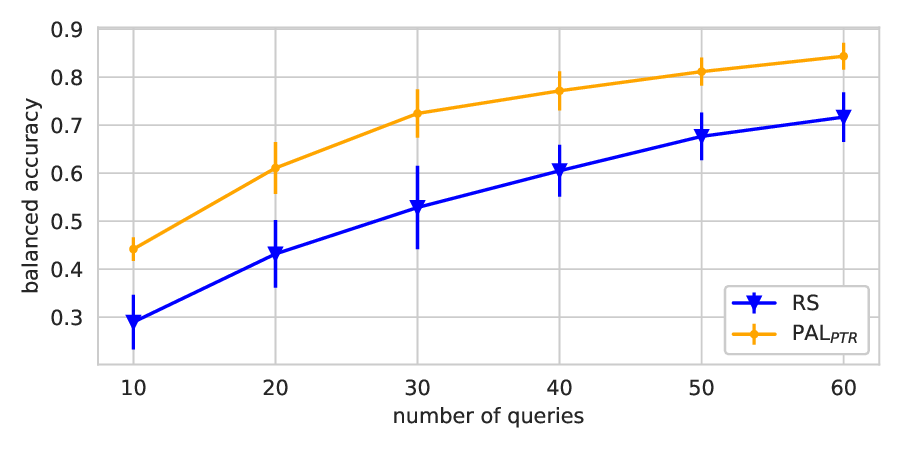}
 \caption{coil-20}
 \end{subfigure}
 \begin{subfigure}[b]{0.49\textwidth}
 \centering
 \includegraphics[width=\textwidth]{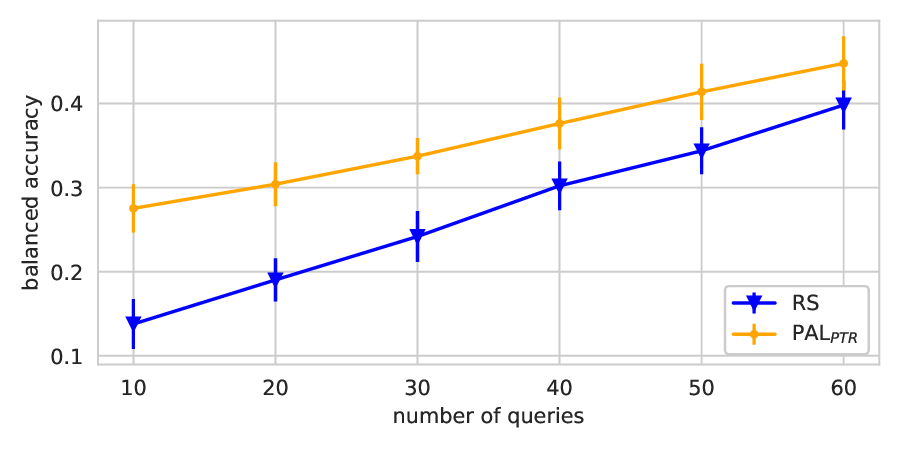}
 \caption{isolet}
 \end{subfigure}
 \begin{subfigure}[b]{0.49\textwidth}
 \centering
 \includegraphics[width=\textwidth]{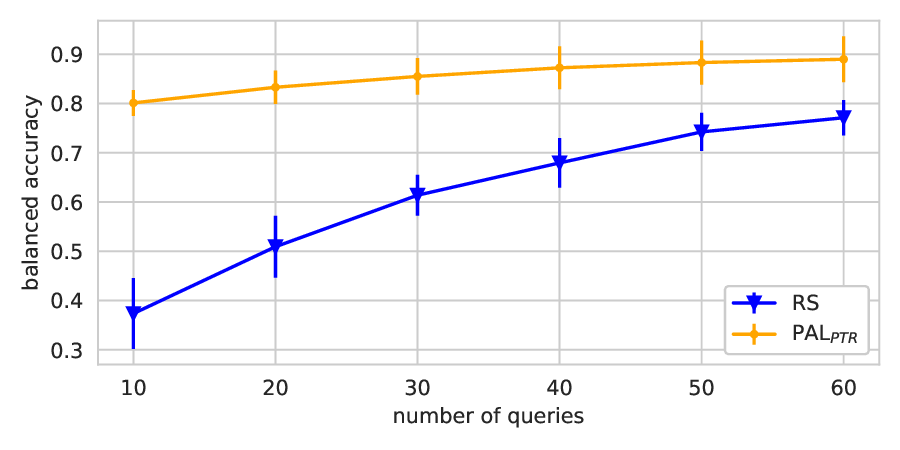}
 \caption{pendigits}
 \end{subfigure}
 \begin{subfigure}[b]{0.49\textwidth}
 \centering
 \includegraphics[width=\textwidth]{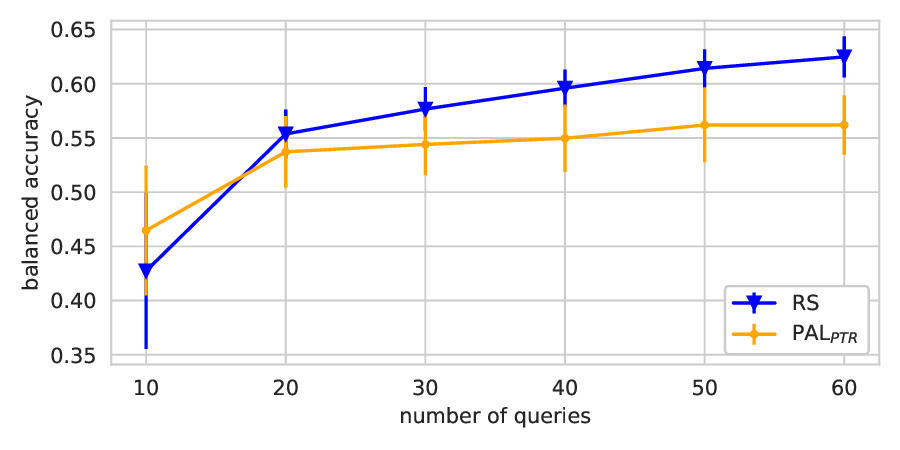}
 \caption{nursery}
 \end{subfigure}
\end{figure}

\section{Conclusion}\label{sec:concl}
We propose a data driven meta-approach for pool-based active learning strategies for multi-class classification problems. Our approach is based on the introduced notion of proper topological regions of a given sample set. We showed the theoretical foundations of this notion and derived a black-box optimization problem to uncover the proper topological regions. Then, we describe how to use those proper topological regions to select the first points to label in a zero-shot learning task, and we derive a meta-approach for pool-based active learning strategies. Our empirical study validates our meta-approach on different benchmarks, in low-budget scenarios, and for various pool-based active learning strategies. Challenging open questions are left: a theoretical analysis that guarantees good performance in active learning, such as generalization bounds, and the use of semi-supervised approaches to conclude the analysis with a model-dependent approach by having a regularization term derived from the PTR.

\bibliographystyle{apalike}
\bibliography{refs}

\begin{thebibliography}{}

\bibitem[Abe and Mamitsuka, 1998]{Abe98}
Abe, N. and Mamitsuka, H. (1998).
\newblock Query learning strategies using boosting and bagging.
\newblock In {\em Proceedings of the Fifteenth International Conference on
  Machine Learning}, ICML '98, page 1–9, San Francisco, CA, USA. Morgan
  Kaufmann Publishers Inc.

\bibitem[Ali et~al., 2014]{Ali14}
Ali, A., Caruana, R., and Kapoor, A. (2014).
\newblock Active learning with model selection.
\newblock In {\em AAAI}.

\bibitem[Amini and Usunier, 2015]{AminiUsunier15}
Amini, M.-R. and Usunier, N. (2015).
\newblock {\em Learning with Partially Labeled and Interdependent Data}.
\newblock Springer, New York, USA.

\bibitem[Andresini et~al., 2023]{andresini_2023}
Andresini, G., Appice, A., Ienco, D., and Malerba, D. (2023).
\newblock Seneca: Change detection in optical imagery using siamese networks
  with active-transfer learning.
\newblock {\em Expert Systems with Applications}, 214:119123.

\bibitem[Auer et~al., 2002]{Auer02}
Auer, P., Cesa-Bianchi, N., and Fischer, P. (2002).
\newblock Finite-time analysis of the multiarmed bandit problem.
\newblock {\em Machine Learning}, 47(2):235--256.

\bibitem[Baram et~al., 2004]{Baram04}
Baram, Y., El-Yaniv, R., and Luz, K. (2004).
\newblock Online choice of active learning algorithms.
\newblock {\em J. Mach. Learn. Res.}, 5:255–291.

\bibitem[Bergstra et~al., 2011]{Berg11}
Bergstra, J., Bardenet, R., Bengio, Y., and K\'{e}gl, B. (2011).
\newblock Algorithms for hyper-parameter optimization.
\newblock In Shawe-Taylor, J., Zemel, R., Bartlett, P., Pereira, F., and
  Weinberger, K., editors, {\em Advances in Neural Information Processing
  Systems}, volume~24, Red Hook, New York, USA. Curran Associates, Inc.

\bibitem[Berthelot et~al., 2019]{Bert19}
Berthelot, D., Carlini, N., Goodfellow, I., Papernot, N., Oliver, A., and
  Raffel, C.~A. (2019).
\newblock Mixmatch: A holistic approach to semi-supervised learning.
\newblock In Wallach, H., Larochelle, H., Beygelzimer, A., d~Alch\'{e}-Buc, F.,
  Fox, E., and Garnett, R., editors, {\em Advances in Neural Information
  Processing Systems}, volume~32, Red Hook, New York, USA. Curran Associates,
  Inc.

\bibitem[Bonnin et~al., 2011]{Bon11}
Bonnin, A., Borràs, R., and Vitrià, J. (2011).
\newblock A cluster-based strategy for active learning of rgb-d object
  detectors.
\newblock In {\em ICCV Workshops}, pages 1215--1220, New York, USA. IEEE.

\bibitem[Brodersen et~al., 2010]{Bro10}
Brodersen, K.~H., Ong, C.~S., Stephan, K.~E., and Buhmann, J.~M. (2010).
\newblock The balanced accuracy and its posterior distribution.
\newblock In {\em 2010 20th International Conference on Pattern Recognition},
  pages 3121--3124.

\bibitem[Carlsson, 2012]{carlsson12}
Carlsson, G. (2012).
\newblock {\em The Shape of Data}, page 16–44.
\newblock London Mathematical Society Lecture Note Series. Cambridge University
  Press, Cambridge, United Kingdom.

\bibitem[Carlsson and Gabrielsson, 2020]{Gar20}
Carlsson, G. and Gabrielsson, R.~B. (2020).
\newblock Topological approaches to deep learning.
\newblock In Baas, N.~A., Carlsson, G.~E., Quick, G., Szymik, M., and Thaule,
  M., editors, {\em Topological Data Analysis}, pages 119--146, Cham. Springer
  International Publishing.

\bibitem[Chapelle et~al., 2006]{Chapman06}
Chapelle, O., Sch{\"o}lkopf, B., and Zien, A. (2006).
\newblock {\em Semi-Supervised Learning}.
\newblock MIT Press, Cambridge, Massachusetts, USA.

\bibitem[Chazal et~al., 2009]{Chazal09b}
Chazal, F., Cohen-Steiner, D., Glisse, M., Guibas, L.~J., and Oudot, S.~Y.
  (2009).
\newblock Proximity of persistence modules and their diagrams.
\newblock In {\em Proceedings of the Twenty-Fifth Annual Symposium on
  Computational Geometry}, SCG '09, page 237–246, New York, NY, USA.
  Association for Computing Machinery.

\bibitem[Chazal et~al., 2014]{Chazal14}
Chazal, F., de~Vin~Silva, and Oudot, S. (2014).
\newblock Persistence stability for geometric complexes.
\newblock {\em Geometriae Dedicata}, 173(1):193--214.

\bibitem[Chazal et~al., 2011]{Chazal2011}
Chazal, F., Guibas, L.~J., Oudot, S.~Y., and Skraba, P. (2011).
\newblock Scalar field analysis over point cloud data.
\newblock {\em Discrete \& Computational Geometry}, 46(4):743.

\bibitem[Chazal et~al., 2013]{Chazal13}
Chazal, F., Guibas, L.~J., Oudot, S.~Y., and Skraba, P. (2013).
\newblock Persistence-based clustering in riemannian manifolds.
\newblock {\em J. ACM}, 60(6).

\bibitem[Chen et~al., 2022]{Chen22}
Chen, L., Bai, Y., Huang, S., Lu, Y., Wen, B., Yuille, A.~L., and Zhou, Z.
  (2022).
\newblock Making your first choice: To address cold start problem in vision
  active learning.
\newblock {\em ArXiv}, abs/2210.02442.

\bibitem[Citovsky et~al., 2021]{citovsky2021batch}
Citovsky, G., DeSalvo, G., Gentile, C., Karydas, L., Rajagopalan, A.,
  Rostamizadeh, A., and Kumar, S. (2021).
\newblock Batch active learning at scale.
\newblock In Beygelzimer, A., Dauphin, Y., Liang, P., and Vaughan, J.~W.,
  editors, {\em Advances in Neural Information Processing Systems}.

\bibitem[Danka and Horvath, 2018]{modAL18}
Danka, T. and Horvath, P. (2018).
\newblock mod{AL}: {A} modular active learning framework for {P}ython.
\newblock available on arXiv at \url{https://arxiv.org/abs/1805.00979}.

\bibitem[Dasgupta and Hsu, 2008]{Dasg08}
Dasgupta, S. and Hsu, D. (2008).
\newblock Hierarchical sampling for active learning.
\newblock In {\em Proceedings of the 25th International Conference on Machine
  Learning}, ICML '08, page 208–215, New York, NY, USA. Association for
  Computing Machinery.

\bibitem[Edelsbrunner and Harer, 2010]{Edel10}
Edelsbrunner, H. and Harer, J. (2010).
\newblock {\em Computational Topology - an Introduction}.
\newblock American Mathematical Society, Boston, USA.

\bibitem[Fanty and Cole, 1991]{isolet}
Fanty, M. and Cole, R. (1991).
\newblock Spoken letter recognition.
\newblock In Lippmann, R.~P., Moody, J.~E., and Touretzky, D.~S., editors, {\em
  Advances in Neural Information Processing Systems 3}, pages 220--226.
  Morgan-Kaufmann, Massachusetts, USA.

\bibitem[Garnett, 2022]{garnett22}
Garnett, R. (2022).
\newblock {\em Bayesian Optimization}.
\newblock Cambridge University Press, Cambridge, United Kingdoms.

\bibitem[Guyon et~al., 2011]{guyon11}
Guyon, I., Cawley, G.~C., Dror, G., and Lemaire, V. (2011).
\newblock Results of the active learning challenge.
\newblock In Guyon, I., Cawley, G., Dror, G., Lemaire, V., and Statnikov, A.,
  editors, {\em Active Learning and Experimental Design workshop In conjunction
  with AISTATS 2010}, volume~16 of {\em Proceedings of Machine Learning
  Research}, pages 19--45, Sardinia, Italy. PMLR.

\bibitem[Hatcher, 2000]{Hatcher00}
Hatcher, A. (2000).
\newblock {\em {Algebraic topology}}.
\newblock Cambridge Univ. Press, Cambridge.

\bibitem[Hausmann, 1995]{Haus95}
Hausmann, J.-C. (1995).
\newblock {\em On the Vietoris-Rips complexes and a cohomology theory for
  metric spaces}, pages 175--188.
\newblock Prospects in topology : proceedings of a conference in honor of
  William Browder. Princeton University Press, Princeton, N.J.
\newblock ID: unige:12821.

\bibitem[Higuera et~al., 2015]{protein}
Higuera, C., Gardiner, K.~J., and Cios, K.~J. (2015).
\newblock Self-organizing feature maps identify proteins critical to learning
  in a mouse model of down syndrome.
\newblock {\em PLOS ONE}, 10(6):1--28.

\bibitem[Ho, 1995]{ho95}
Ho, T.~K. (1995).
\newblock Random decision forests.
\newblock In {\em Proceedings of 3rd international conference on document
  analysis and recognition}, volume~1, pages 278--282. IEEE.

\bibitem[Hu et~al., 2010]{Hu10}
Hu, R., Namee, B.~M., and Delany, S.~J. (2010).
\newblock Off to a good start: Using clustering to select the initial training
  set in active learning.
\newblock In {\em FLAIRS}.

\bibitem[Jiang et~al., 2021]{Jia21}
Jiang, Y., Chen, D., Chen, X., Li, T., Wei, G.-W., and Pan, F. (2021).
\newblock Topological representations of crystalline compounds for the
  machine-learning prediction of materials properties.
\newblock {\em npj Computational Materials}, 7(1):28.

\bibitem[Kang et~al., 2004]{Kang04}
Kang, J., Ryu, K.~R., and chul Kwon, H. (2004).
\newblock Using cluster-based sampling to select initial training set for
  active learning in text classification.
\newblock In {\em PAKDD}.

\bibitem[Kaufman and Rousseeuw, 1990]{Kauf90}
Kaufman, L. and Rousseeuw, P. (1990).
\newblock {\em Finding Groups in Data: An Introduction To Cluster Analysis}.

\bibitem[Krempl et~al., 2015]{Krem15}
Krempl, G., Ha, T.~C., and Spiliopoulou, M. (2015).
\newblock Clustering-based optimised probabilistic active learning (copal).
\newblock In Japkowicz, N. and Matwin, S., editors, {\em Discovery Science},
  pages 101--115, Cham. Springer International Publishing.

\bibitem[Krishnapriyan et~al., 2021]{Kri21}
Krishnapriyan, A.~S., Montoya, J., Haranczyk, M., Hummelsh{\o}j, J., and
  Morozov, D. (2021).
\newblock Machine learning with persistent homology and chemical word
  embeddings improves prediction accuracy and interpretability in metal-organic
  frameworks.
\newblock {\em Scientific Reports}, 11(1):8888.

\bibitem[Lakshminarayanan et~al., 2017]{lakshmi2016}
Lakshminarayanan, B., Pritzel, A., and Blundell, C. (2017).
\newblock Simple and scalable predictive uncertainty estimation using deep
  ensembles.
\newblock In {\em Advances in Neural Information Processing Systems 30}, pages
  6402--6413.

\bibitem[Lewis and Catlett, 1994]{Lew94}
Lewis, D.~D. and Catlett, J. (1994).
\newblock Heterogeneous uncertainty sampling for supervised learning.
\newblock In Cohen, W.~W. and Hirsh, H., editors, {\em Machine Learning
  Proceedings 1994}, pages 148--156. Morgan Kaufmann, San Francisco (CA).

\bibitem[Li et~al., 2020]{Li20}
Li, W., Dasarathy, G., Natesan~Ramamurthy, K., and Berisha, V. (2020).
\newblock Finding the homology of decision boundaries with active learning.
\newblock In Larochelle, H., Ranzato, M., Hadsell, R., Balcan, M., and Lin, H.,
  editors, {\em Advances in Neural Information Processing Systems}, volume~33,
  pages 8355--8365. Curran Associates, Inc.

\bibitem[Lloyd, 1982]{Lloyd82}
Lloyd, S. (1982).
\newblock Least squares quantization in pcm.
\newblock {\em IEEE Transactions on Information Theory}, 28(2):129--137.

\bibitem[Lughofer, 2012]{Lughofer12}
Lughofer, E. (2012).
\newblock Single-pass active learning with conflict and ignorance.
\newblock {\em Evolving Systems}, 3(4):251--271.

\bibitem[Lum et~al., 2013]{Lum13}
Lum, P., Singh, G., Lehman, A., Ishkanov, T., Vejdemo-Johansson, M., Alagappan,
  M., Carlsson, J., and Carlsson, G. (2013).
\newblock Extracting insights from the shape of complex data using topology.
\newblock {\em Scientific reports}, 3:1236.

\bibitem[Perez et~al., 2018]{Fab18}
Perez, F., Lebret, R., and Aberer, K. (2018).
\newblock Cluster-based active learning.
\newblock {\em CoRR}, page abs/1812.11780.

\bibitem[Pourahmadi et~al., 2021]{pour21}
Pourahmadi, K., Nooralinejad, P., and Pirsiavash, H. (2021).
\newblock A simple baseline for low-budget active learning.
\newblock {\em arXiv preprint arXiv:2110.12033}.

\bibitem[Rieck et~al., 2020]{Rieck20}
Rieck, B., Yates, T., Bock, C., Borgwardt, K., Wolf, G., Turk-Browne, N., and
  Krishnaswamy, S. (2020).
\newblock Uncovering the topology of time-varying fmri data using cubical
  persistence.
\newblock In Larochelle, H., Ranzato, M., Hadsell, R., Balcan, M.~F., and Lin,
  H., editors, {\em Advances in Neural Information Processing Systems},
  volume~33, pages 6900--6912, Red Hook, New York, USA. Curran Associates, Inc.

\bibitem[Romano et~al., 2021]{pmlb21}
Romano, J.~D., Le, T.~T., La~Cava, W., Gregg, J.~T., Goldberg, D.~J.,
  Chakraborty, P., Ray, N.~L., Himmelstein, D., Fu, W., and Moore, J.~H.
  (2021).
\newblock Pmlb v1.0: an open source dataset collection for benchmarking machine
  learning methods.
\newblock {\em arXiv preprint arXiv:2012.00058v2}.

\bibitem[Roy and McCallum, 2001]{RoyM01}
Roy, N. and McCallum, A. (2001).
\newblock Toward optimal active learning through sampling estimation of error
  reduction.
\newblock In Brodley, C.~E. and Danyluk, A.~P., editors, {\em Proceedings of
  the Eighteenth International Conference on Machine Learning {(ICML} 2001),
  Williams College, Williamstown, MA, USA, June 28 - July 1, 2001}, pages
  441--448, Massachusetts, USA. Morgan Kaufmann.

\bibitem[Settles, 2009]{settles09}
Settles, B. (2009).
\newblock Active learning literature survey.
\newblock Computer Sciences Technical Report 1648, University of
  Wisconsin--Madison.

\bibitem[Siméoni et~al., 2019]{simoni19}
Siméoni, O., Budnik, M., Avrithis, Y., and Gravier, G. (2019).
\newblock Rethinking deep active learning: Using unlabeled data at model
  training.
\newblock {\em ICPR}.

\bibitem[Singh et~al., 2007]{Sin07}
Singh, G., Mémoli, F., and Carlsson, G. (2007).
\newblock Topological methods for the analysis of high dimensional data sets
  and 3d object recognition.
\newblock pages 91--100.

\bibitem[Thoreau et~al., 2022]{thoreau_2022}
Thoreau, R., Achard, V., Risser, L., Berthelot, B., and Briottet, X. (2022).
\newblock Active learning on large hyperspectral datasets: A preprocessing
  method.
\newblock {\em The International Archives of the Photogrammetry, Remote Sensing
  and Spatial Information Sciences}, XLIII-B3-2022:435--442.

\bibitem[Urner et~al., 2013]{Urner13}
Urner, R., Wulff, S., and Ben-David, S. (2013).
\newblock Plal: Cluster-based active learning.
\newblock In Shalev-Shwartz, S. and Steinwart, I., editors, {\em Proceedings of
  the 26th Annual Conference on Learning Theory}, volume~30 of {\em Proceedings
  of Machine Learning Research}, pages 376--397, Princeton, NJ, USA. PMLR.

\bibitem[Voorhees, 1985]{Voor85}
Voorhees, E.~M. (1985).
\newblock {\em The Effectiveness {\&} Efficiency of Agglomerative Hierarchic
  Clustering in Document Retrieval}.
\newblock PhD thesis, Cornell University, {USA}.

\bibitem[Wolfe, 2012]{Lehman75}
Wolfe, D.~A. (2012).
\newblock {\em Nonparametrics: Statistical Methods Based on Ranks and Its
  Impact on the Field of Nonparametric Statistics}, pages 1101--1110.
\newblock Springer US, Boston, MA.

\bibitem[Yan et~al., 2011]{yan11}
Yan, Y., Rosales, R., Fung, G., and Dy, J.~G. (2011).
\newblock Active learning from crowds.
\newblock In {\em Proceedings of the 28th International Conference on
  International Conference on Machine Learning}, page 1161–1168.

\bibitem[Yang et~al., 2011]{coil20}
Yang, J., Chen, Z., Chen, W.-S., and Chen, Y. (2011).
\newblock Robust affine invariant descriptors.
\newblock {\em Mathematical Problems in Engineering}.

\bibitem[Yu and Hansen, 2017]{Yu17}
Yu, C. and Hansen, J. H.~L. (2017).
\newblock Active learning based constrained clustering for speaker diarization.
\newblock {\em IEEE/ACM Transactions on Audio, Speech, and Language
  Processing}, 25(11):2188--2198.

\bibitem[Zhu et~al., 2008]{zhu08}
Zhu, J., Wang, H., Yao, T., and Tsou, B.~K. (2008).
\newblock Active learning with sampling by uncertainty and density for word
  sense disambiguation and text classification.
\newblock In {\em Proceedings of the 22nd International Conference on
  Computational Linguistics (Coling 2008)}, pages 1137--1144, Manchester, UK.

\end{thebibliography}

\newpage

\appendix
\section{Proof of Theorem \ref{main_theo}}\label{App:proof:thm}

To show that two persistence diagrams are close to one another with respect to the bottleneck distance, one can use the following notion introduced in \cite{Chazal09b}.

\begin{definition}[$\varepsilon$-interleaved]
\label{def:interleaved}
Let $\mathbf{X}=(X_\alpha)_{\alpha\in\overline{\real}}$ and $\mathbf{Y}=(Y_\alpha)_{\alpha\in\overline{\real}}$ be two persistence modules and let $D\mathbf{X}$. We say that $\mathbf{X}$ and $\mathbf{Y}$ are \emph{strongly $\varepsilon$-interleaved} if there exist two families of linear application $\{\varphi_\alpha\colon X_\alpha\to Y_{\alpha-\varepsilon}\}_{\alpha\in{\real}}$ and $\{\psi_\alpha\colon Y_\alpha\to X_{\alpha-\varepsilon}\}_{\alpha\in {\real}}$, such that for all $\alpha,\beta\in\mathbb R$, if $\alpha\leq \beta$, then the following diagrams, whenever they make sense, are commutative:

\resizebox{\linewidth}{!}{
 \begin{minipage}{\linewidth}
 \begin{align*}
 \xymatrix @!0 @R=2cm @C=1cm {
 &X_{\beta+\varepsilon}\ar[rrr]\ar_{\varphi_{\beta+\varepsilon}}[dl] & & & X_{\alpha-\varepsilon} \\
 Y_{\beta}\ar[rrrrr] & & & & & Y_{\alpha}\ar_{\psi_{\alpha}}[ul]&} 
 &&
 \xymatrix @!0 @R=2cm @C=1cm {
 &X_{\beta}\ar[rrrrr]\ar_{\varphi_{\beta}}[dl] & & & & & X_\alpha\ar_{\varphi_{\alpha}}[dl] \\
 Y_{\beta-\varepsilon}\ar[rrrrr] & & & & & Y_{\alpha-\varepsilon}} 
 \\&&\\
 \xymatrix @!0 @R=2cm @C=1cm {
 X_{\beta}\ar[rrrrr] & & & & & X_{\alpha}\ar_{\varphi_{\alpha}}[dl]& \\
 & Y_{\beta+\varepsilon}\ar_{\psi_{\beta+\varepsilon}}[ul]\ar[rrr] & & & Y_{\alpha-\varepsilon}} 
 &&
 \xymatrix @!0 @R=2cm @C=1cm {
 X_{\beta-\varepsilon}\ar[rrrrr] & & & & & X_{{\alpha}-\varepsilon} \\
 &Y_{\beta}\ar[rrrrr]\ar_{\psi_{\beta}}[ul] & & & & & Y_{\alpha}\ar_{\psi_{\alpha}}[ul]}
 \end{align*}
 \end{minipage}
}
\end{definition}

The idea behind these diagrams is that every component appearing (resp. dying) in $\mathbf{X}$ at some time $\alpha$ must appear (resp. die) in $\mathbf{Y}$ within $[\alpha - \varepsilon , \alpha + \varepsilon]$, and vice-versa. The following lemma highlights how important this notion is.

\begin{lemma}\label{lem:interleaved}
Let $\mathbf{X}$ and $\mathbf{Y}$ be two persistence modules such that $D\mathbf{X}$ and $D\mathbf{Y}$ have only finitely many points away from the diagonal, and let $\varepsilon>0$. If $\mathbf{X}$ and $\mathbf{Y}$ are strongly $\varepsilon$-interleaved, then $D\mathbf{X}$ and $D\mathbf{Y}$ are at a distance at most $\varepsilon$ with respect to the bottleneck distance. 
\end{lemma}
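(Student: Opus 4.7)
The plan is to follow the classical algebraic stability argument of Chazal--Cohen-Steiner--Glisse--Guibas--Oudot. The strategy is to convert the interleaving data into inequalities on persistent ranks, bootstrap these into box-counting inequalities comparing multiplicities of rectangles in $D\mathbf{X}$ and $D\mathbf{Y}$, and then invoke a Hall-type matching argument to extract a bijection $\gamma$ (with the diagonal $\Delta$ glued in with infinite multiplicity) that moves every point by at most $\varepsilon$ in the $\ell^\infty$ norm.

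First I would define, for $\alpha\leq\beta$, the persistent rank $r_\mathbf{X}(\alpha,\beta)=\mathrm{rank}(\varphi^{\mathbf{X}}_{\beta,\alpha})$, and similarly $r_\mathbf{Y}$. The first and fourth commutative diagrams in Definition~\ref{def:interleaved} factor a filtration map of one module through the other: writing out the first diagram gives that the map $X_{\beta+\varepsilon}\to X_{\alpha-\varepsilon}$ equals $\psi_{\alpha}\circ (Y_\beta\to Y_\alpha)\circ \varphi_{\beta+\varepsilon}$, so passing to ranks and reparametrizing yields, for every $u\leq v$ with $v-u\geq 2\varepsilon$,
\[
r_\mathbf{X}(u,v)\leq r_\mathbf{Y}(u+\varepsilon,v-\varepsilon),\qquad r_\mathbf{Y}(u,v)\leq r_\mathbf{X}(u+\varepsilon,v-\varepsilon),
\]
the second inequality coming symmetrically from the fourth diagram.

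Next I would convert these rank inequalities into box inequalities. For any rectangle $R=[b_1,b_2]\times[d_1,d_2]$ with $d_2\leq b_1$, the multiplicity $\mu_\mathbf{X}(R)$ of points of $D\mathbf{X}$ lying in $R$ is given by the standard inclusion--exclusion formula $\mu_\mathbf{X}(R)=r_\mathbf{X}(d_2,b_1)-r_\mathbf{X}(d_1,b_1)-r_\mathbf{X}(d_2,b_2)+r_\mathbf{X}(d_1,b_2)$ (persistent Betti numbers). Applying the rank inequalities to each of the four corner values, with the shifts oriented so that each corner moves into the $\varepsilon$-thickened rectangle $R^\varepsilon=[b_1-\varepsilon,b_2+\varepsilon]\times[d_1-\varepsilon,d_2+\varepsilon]$, gives the key box inequality $\mu_\mathbf{X}(R)\leq \mu_\mathbf{Y}(R^\varepsilon)$, and symmetrically $\mu_\mathbf{Y}(R)\leq \mu_\mathbf{X}(R^\varepsilon)$.

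Finally, with the tameness hypothesis (only finitely many off-diagonal points) the $\varepsilon$-matching is built by Hall's marriage theorem. Consider the bipartite graph with off-diagonal points of $D\mathbf{X}$ on one side, off-diagonal points of $D\mathbf{Y}$ together with sufficiently many copies of the diagonal on the other, edges being $\ell^\infty$-neighbors within distance $\varepsilon$. For any finite subset $S\subseteq D\mathbf{X}$ of off-diagonal points, cover $S$ by disjoint rectangles; its $\varepsilon$-neighborhood decomposes into rectangles whose multiplicities in $D\mathbf{Y}$ dominate the multiplicities in $D\mathbf{X}$ by the box inequalities, and points of $S$ within $\varepsilon$ of $\Delta$ can always be matched to the diagonal. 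Thus Hall's condition is met, and the resulting bijection, extended by mapping unmatched low-prominence points to their diagonal projections, gives $\max_p\|p-\gamma(p)\|_\infty\leq\varepsilon$. The step I expect to be the main obstacle is verifying Hall's condition cleanly: one must handle arbitrary finite subsets rather than single rectangles, which requires decomposing the $\varepsilon$-neighborhood of $S$ into well-chosen disjoint sub-rectangles, keeping track of overlaps, and dealing carefully with points of prominence close to $\varepsilon$ so that diagonal slack absorbs them correctly.
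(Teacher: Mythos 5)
The first thing to note is that the paper does not actually prove this lemma: it is quoted as a direct consequence of Theorem~4.4 of \cite{Chazal09b}, so any self-contained argument here is reconstructing that reference. Your first two steps do this faithfully: the factorizations read off from the first and fourth diagrams of Definition~\ref{def:interleaved} give exactly the rank inequalities $r_{\mathbf{X}}(u,v)\leq r_{\mathbf{Y}}(u+\varepsilon,v-\varepsilon)$ (and symmetrically), and feeding these into the inclusion--exclusion formula for multiplicities yields the Box Lemma $\mu_{\mathbf{X}}(R)\leq\mu_{\mathbf{Y}}(R^\varepsilon)$, modulo the usual care with half-open rectangles and with boxes whose $\varepsilon$-thickening meets the diagonal (for which the inequality is not asserted). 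Up to that point the sketch is correct.

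The gap is precisely where you flagged it, and it is not a matter of bookkeeping: the Hall verification as described does not go through. If you cover a finite off-diagonal set $S\subseteq D\mathbf{X}$ by disjoint rectangles $R_i$, the box inequalities give a lower bound on $\sum_i\mu_{\mathbf{Y}}(R_i^\varepsilon)$, but the thickened rectangles $R_i^\varepsilon$ overlap as soon as two points of $S$ are within $2\varepsilon$ of each other in the $\ell^\infty$ norm, so that sum can strictly exceed $\mu_{\mathbf{Y}}\bigl(\bigcup_i R_i^\varepsilon\bigr)=\mu_{\mathbf{Y}}(N(S))$ and you cannot conclude $|N(S)|\geq|S|$; conversely, enlarging to the bounding box of $S$ gives a thickening that strictly contains $N(S)$, so its $\mathbf{Y}$-count may include points that are not $\varepsilon$-neighbors of $S$ at all. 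The box inequalities control $N(S)$ only when the squares $Q_\varepsilon(p)$, $p\in S$, are pairwise disjoint, i.e., under a separation hypothesis on $D\mathbf{X}$ relative to $\varepsilon$ --- this is the ``easy bijection'' case. The way Theorem~4.4 of \cite{Chazal09b} is actually proved is to close this gap with an interpolation lemma: one constructs a one-parameter family of persistence modules $(\mathbf{U}_s)_{s\in[0,\varepsilon]}$ between $\mathbf{X}$ and $\mathbf{Y}$ such that $\mathbf{U}_s$ and $\mathbf{U}_t$ are strongly $|s-t|$-interleaved, subdivides $[0,\varepsilon]$ finely enough that each consecutive pair satisfies the separation condition relative to its interleaving constant, applies the easy case to each pair, and composes the matchings; the triangle inequality then yields the tight bound $\varepsilon$ rather than a degraded constant. (An alternative that avoids interpolation is the induced-matching argument of Bauer and Lesnick, which builds the $\varepsilon$-matching directly from the images of the interleaving morphisms at the module level.) Without one of these ingredients, your argument as written only establishes the lemma for diagrams whose off-diagonal points are well separated compared to $\varepsilon$.
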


This lemma is a direct consequence of \citet[Theorem 4.4]{Chazal09b} where the result is proven for every homological dimension.

For example, in \citet[Theorem 5]{Chazal2011}, it is proven that given the density function $\Prob$ on a point cloud $\Set_\obs$ with sufficient sampling density, the persistence diagram $D\real_\delta(\Set_\obs, \Prob)$ built upon the Rips graph $R_\delta(\Set_\obs)$ with an appropriate $\delta$ is a good approximation of $D\Prob$ the persistence diagram of $\Prob$. Consequently, $D\real_\delta(\Set_\obs, \Prob)$ encodes the \textit{0th homology groups} of the underlying space of $\Set_\obs$, this is a crucial ingredient in the proof of the theoretical guarantees of \tomato.


\begin{proof}[Proof of Theorem \ref{main_theo}]

Let denote by $\mathbf{R}_\delta=\mathbf{R}_{\delta}(\Set_\obs,\Prob)$ and $\mathbf{R}_{\sigma(\cdot)}=\mathbf{R}_{\sigma(\cdot)}(\Set_\obs,\Prob)$.
$R_\delta=R_{\delta}(\Set_\obs)$, 
and $R_{\sigma(\cdot)}=R_{\sigma(\cdot)}(\Set_\obs)$ and, for $\alpha \in \mathbb{R}$,
we set \[R_{\delta,\alpha}={R}_{\delta}\left(\Set_\obs\cap \Prob^{-1}([\alpha,+\infty])\right)\qquad\text{and}\qquad R_{\sigma(\cdot),\alpha}={R}_{\sigma(\cdot)}\left(\Set_\obs\cap \Prob^{-1}([\alpha,+\infty])\right).\]

For $\alpha\in\mathbb{R}$, let $\mathcal{C}_1,\dots, \mathcal{C}_k$ be the connected components of $R_{\delta,\alpha}$. For every $q\in\{1,\dots,k\}$, and each vertices $\obs_i,\obs_j\in \mathcal{C}_q$, we have that $\alpha_{\delta}(\obs_i,\obs_j)\geq \alpha$ and thus, by definition of $\varepsilon$, $\alpha_{\sigma(\cdot)}(\obs_i,\obs_j)\geq \alpha-\varepsilon$. Hence $\mathcal{C}_q$ is contained in a connected component of $R_{\sigma(\cdot),\alpha-\varepsilon}$. This gives a linear map: \[\varphi_\alpha\colon H_0(R_{\delta,\alpha})\to H_0(R_{\sigma(\cdot),\alpha-\varepsilon}).\]
By a similar argument, we get a linear map:
\[\psi_\alpha\colon H_0(R_{\sigma(\cdot),\alpha})\to H_0(R_{\delta,\alpha-\varepsilon}).\]
 
By construction, the following diagrams are commutative (the linear maps involved are induced by inclusions on connected components).

\resizebox{\linewidth}{!}{
 \begin{minipage}{\linewidth}
 \begin{align*}
 \xymatrix @!0 @R=2cm @C=1cm {
 &H_0(R_{\delta,\beta+\varepsilon})\ar[rrr]\ar_{\varphi_{\beta+\varepsilon}}[dl] & & & H_0(R_{\delta,\alpha-\varepsilon}) \\
 H_0(R_{\sigma(\cdot),\beta})\ar[rrrrr] & & & & & H_0(R_{\sigma(\cdot),\alpha})\ar_{\psi_{\alpha}}[ul]&} 
 &
 \xymatrix @!0 @R=2cm @C=1cm{
 &H_0(R_{\delta,\beta})\ar[rrrrr]\ar_{\varphi_{\beta}}[dl] & & & & & H_0(R_{\delta,\alpha})\ar_{\varphi_{\alpha}}[dl] \\
 H_0(R_{\sigma(\cdot),\beta-\varepsilon})\ar[rrrrr] & & & & & H_0(R_{\sigma(\cdot),\alpha-\varepsilon})} 
 \\&\\
 \xymatrix @!0 @R=2cm @C=1cm {
 H_0(R_{\delta,\beta})\ar[rrrrr] & & & & & H_0(R_{\delta,\alpha})\ar_{\varphi_{\alpha}}[dl]& \\
 & H_0(R_{\sigma(\cdot),\beta+\varepsilon})\ar_{\psi_{\beta+\varepsilon}}[ul]\ar[rrr] & & & H_0(R_{\sigma(\cdot),\alpha-\varepsilon})} 
 &
 \xymatrix @!0 @R=2cm @C=1cm {
 H_0(R_{\delta,\beta-\varepsilon})\ar[rrrrr] & & & & & H_0(R_{\delta,{\alpha}-\varepsilon}) \\
 &H_0(R_{\sigma(\cdot),\beta})\ar[rrrrr]\ar_{\psi_{\beta}}[ul] & & & & & H_0(R_{\sigma(\cdot),\alpha})\ar_{\psi_{\alpha}}[ul]}
 \end{align*}
 \end{minipage}
}
Consequently, $\mathbf{R}$ and $\mathbf R^\sigma$ are strongly $\varepsilon$-interleaved, then the bottleneck distance is bounded. 
\end{proof}

\section{More details on the theoretical guarantees of \tomato}\label{app:tomato}
Let us assume that $\INs$ is a Riemannian $m$-manifold and the density function $\Prob \colon \INs \to \real$ is a $\kappa$-Lipschitz probability density function with respect to the $m$-dimensional Hausdorff measure and $\kappa>0$. In order to draw the theoretical guarantees of \tomato\ we need some assumption on the persistence diagram of $\Prob$, more precisely, on the spatial distribution of the points in this diagram.

\begin{definition}
Let $d_1,d_2\in\mathcal R$ be two non-negative real numbers such that $d_1<d_2$. The persistent diagram $D\Prob$ is called \emph{$(d_1,d_2)$-separated} if every point of $D\Prob$ lies either in the region $D_1$ above the diagonal line $y=x-d_1$ or in the region $D_2$ below the diagonal line $y=x-d_2$ and to the right of the vertical line $x=d_2$.
\end{definition}

The points in the region $D_2$ will be considered as the \emph{prominence peaks} and the points in the region $D_1$ as "topological noise". 

We highlight the theoretical guarantees of \tomato. We refer the reader interested in more details and the proof of the statements to \cite{Chazal13}.

The first guarantee ensures that with reasonable assumption on the point cloud and the density $\Prob$ \tomato\ can recover the numbers of clusters induced by $\Prob$.

\begin{result}[{\cite[Theorem 9.2]{Chazal13}}]\label{R:1}
Assume that $\mathcal{S}_\obs$ is i.i.d with respect to $\Prob$. If $D\Prob$ is $(d_1,d_2)$-separated and if the parameter $\delta$ is smaller than a fraction of $d_2-d_1$ and of the convexity radius of $\mathcal{X}$, then there is a range $(d_1+2\kappa\delta,d_2-3\kappa\delta)$ of values of $\tau$ such that the number of topological regions output by $\tomato_\tau(R_\delta(S),\Prob)$ is equal to the number of peaks, of $\Prob$ with prominence at least $\tau$ with probability at least $1-e^{\Omega(n)}$ where $n$ is the number of data points. 
\end{result}

$\Omega(n)$ hides a factor increasing monotonically with $c$ and $\delta$ and depending on certain geometric quantities of the manifold $\mathcal{X}$. 

The following result tells us that, under the same hypotheses, we can recover the basins of attractions of the prominent peaks of $\Prob$. 

\begin{result}[{\cite[Theorem 10.1]{Chazal13}}]\label{R:2}
Under the same hypotheses as in Result \ref{R:1} and with the same probability, we have that, for every point $p\in D_2$, $\tomato_\tau(R_\delta(S),\Prob)$ outputs a topological region $R$ such that $R\cap \Prob^{-1}([\alpha, +\infty])=B_\tau(m_p)\cap \mathcal S\cap \Prob^{-1}([\alpha, +\infty])$ for all $\alpha\in (\alpha_\tau(p)+d_1+\frac52\kappa\delta,p_x]$, where $m_p$ is the peaks of $\Prob$ corresponding to $p$, $B_\tau(m_p)$ is the basin of attraction of $m_p$ in the underlying manifold $\mathcal{X}$, and $\alpha_\tau(m_p)$ is the first value of $\alpha$ at which $B_\tau(m_p)$ gets connected to the basin of attraction of other peaks of $\Prob$ of prominence at least $\tau$ in the superlevel-set $\Prob^{-1}([\alpha, +\infty])$.
\end{result}

In other words, $R$ is the trace of the basin of attraction $B_\tau(m_p)$ over the point of $\Set$, until the value $\alpha_\tau(m_p)$ at which the basin of attraction meets another $\tau$-prominent peak. 

Finally, it is worth mentioning that in \cite[Section 11]{Chazal13}, they also study the robustness of the approach when considering an estimation of the density function.

Theorem~\ref{main_theo} tells us that one way to ensure the bottleneck constraint in \eqref{eq:opt} is to apply the same post-processing phase used in the \tomato{} algorithm on the $\sigma$-Rips graph. It consists of applying a merging rule along the hill-climbing method on the graph with $\Prob$. This merging rule compares the \textit{topological persistence} of connected components to an additional merging parameter $\tau \in [0, \max_{\obs \in \Set_\obs} \Prob(\obs)]$ \citep{Chazal13}. 

\section{How to approximate the Purity Size objective function}\label{app:uns_crit}
For a given graph $R(\Set_\obs)$, let $\mathcal{C}_1, \dots, \mathcal{C}_k$ be the connected components of this graph, we define the  mean-sample per connected component $\mathcal{C} _q$, and the mean-sample of $\Set_\obs$ as follow:
\begin{align*}
    \mu _q &= \frac{1}{\lvert \mathcal{C} _q\rvert} \sum_{\obs \in \mathcal{C} _q} \obs, \forall q \in \{1,\dots,k\}\\
    \mu &= \frac{1}{n} \sum_{i=1}^{n} \obs _i.
\end{align*}
Then, apart from the adapted Silhouette score \eqref{silsize} that we use here the following scores can be used to approximate the purity size objective function:
\begin{itemize}
 \item Calinski-Harabasz score\\ \[S_{ch}(R(\Set_\obs)) = \left [\frac{(n-k)B}{(k-1)\sum_{q=1}^{k} W _q} \right ] \in [0, +\infty), \]
 with $B = \sum_{q=1}^k \lvert \mathcal{C} _q \rvert \| \mu _q - \mu \|^2$ is the inter-group variance, and $W _q = \sum_{\obs \in \mathcal{C} _q} \| \obs - \mu _q \|^2$ is the intra-group variance, for all $q \in \{1,\dots,k\}$. It translates that good partitioning should maximize the average inter-group variance and minimize the average intra-group variance; some well known clustering algorithms, such as K-means \citep{Lloyd82}, maximize this criterion by construction. 
 \item Davies-Bouldin score\\ \[ S_{db}(R(\Set_\obs)) = \left [\frac{1}{k} \sum_{q=1}^k \max_{j \neq q} \left ( \frac{\bar \delta _q + \bar \delta_j}{d(\mu _q, \mu_j)} \right )\right ] \in (+\infty, 0],\]
 with $\bar \delta _q = \frac{1}{\lvert \mathcal{C} _q \rvert} \sum_{\obs \in \mathcal{C} _q} d(\obs, \mu _q)$ is the average distance of all samples in the group to their mean-sample group, for all $q \in \{1,\dots,k\}$.
 \item Dunn score\\
 \[ S_{d}(R(\Set_\obs)) = \left [\frac{\min_{q, j} d(\mu _q, \mu_j)}{\max _q \Delta _q}\right ] \in [0, +\infty),\]
 with $\Delta _q = \underset{\obs, \obs' \in \mathcal{C} _q}{\max} d(\obs, \obs')$ being the diameter of group $\mathcal{C} _q$, similar to the Calinski-Harbasz score, we aim to maximize the minimum distance between the mean-sample groups and minimize the maximum group diameter. 
\end{itemize}


\section{More empirical results}
\subsection{Rips graph vs $\sigma$-Rips graph} \label{apdx:graphs_comp}

In this section we show an empirical comparison in Figure~\ref{fig:all_datasets_graphs} between the Rips graph and the $\sigma$-Rips graph using the considered datasets. Overall, the $\sigma$-Rips graph achieves a better \textit{PuritySize} (PS) score than the Rips graph, except for coil-20 datasets where the scores are comparable. Note that the retrieved decision curves of the $\sigma$-Rips graphs are anti-correlated to the density estimation, which validates our intuition, and motivates the $\sigma$-Rips graph formulation.

\begin{figure}
 \centering
 \caption{Comparison study between the Rips graph and the $\sigma$-Rips graph over all datasets, the Purity Size score is reported for each minimizer.} \label{fig:all_datasets_graphs}
 \begin{subfigure}[b]{0.49\textwidth}
 \centering
 \includegraphics[width=\textwidth, trim = 0 1.5cm 0 2cm, clip=TRUE]{figs/protein_graphs}
 \label{fig:protein_graphs}
 \caption{protein}
 \end{subfigure}
 \begin{subfigure}[b]{0.49\textwidth}
 \centering
 \includegraphics[width=\textwidth, trim = 0 1.5cm 0 2cm, clip=TRUE]{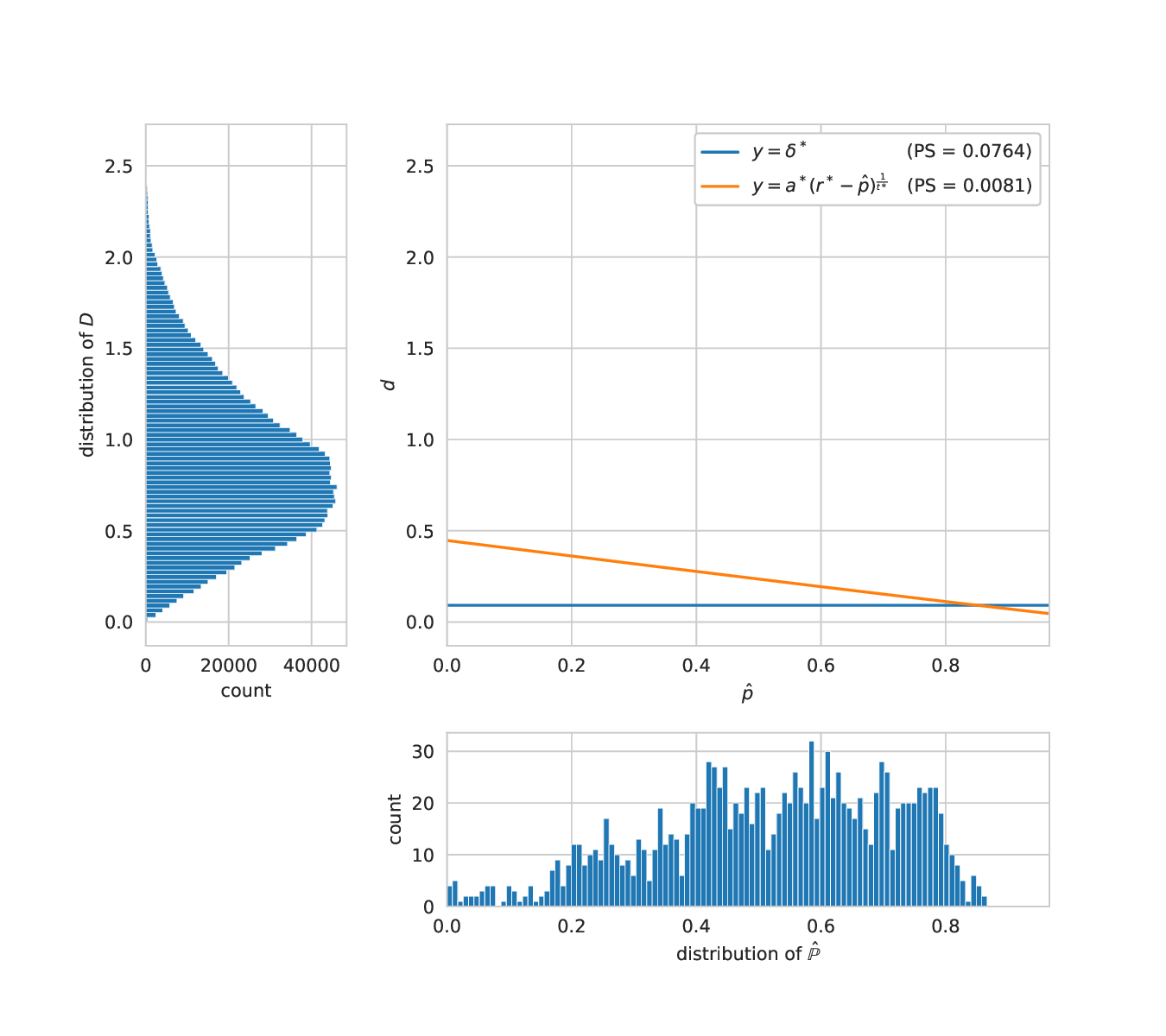}
 \label{fig:banknote_graphs}
 \caption{banknote}
 \end{subfigure}
 \begin{subfigure}[b]{0.49\textwidth}
 \centering
 \includegraphics[width=\textwidth, trim = 0 1.5cm 0 2cm, clip=TRUE]{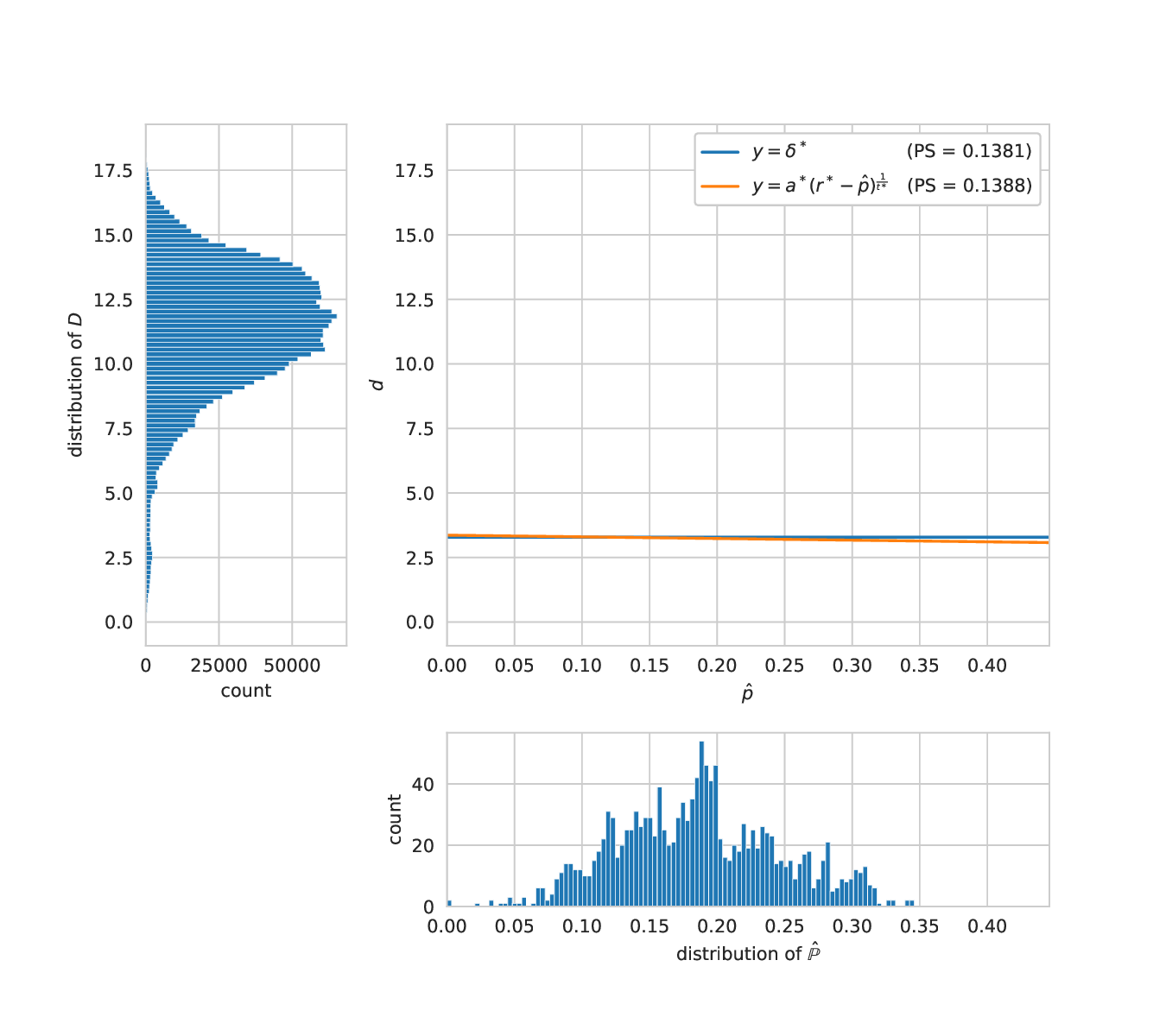}
 \label{fig:20_graphs}
 \caption{coil-20}
 \end{subfigure}
 \begin{subfigure}[b]{0.49\textwidth}
 \centering
 \includegraphics[width=\textwidth, trim = 0 1.5cm 0 2cm, clip=TRUE]{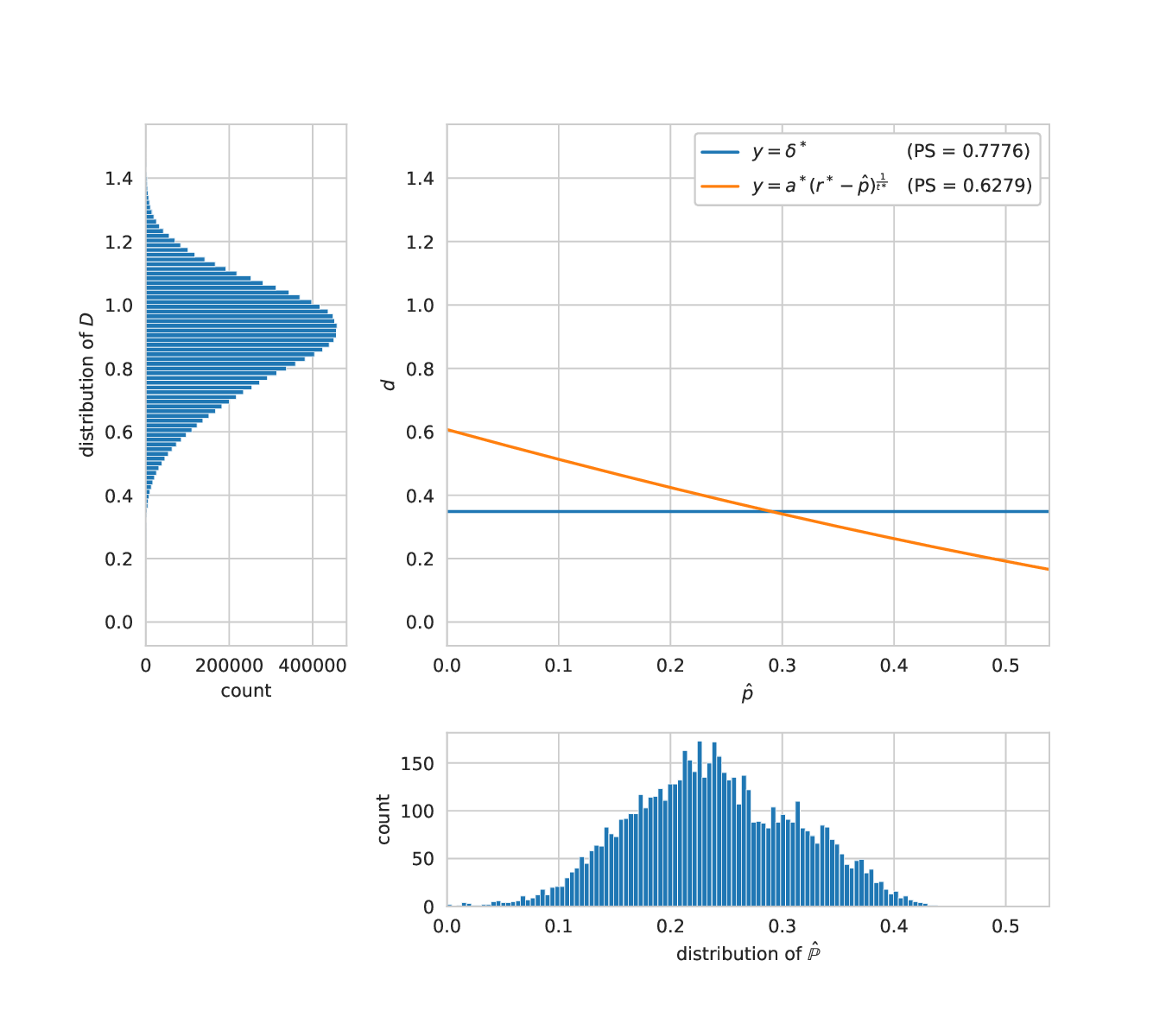}
 \label{fig:isolet_graphs}
 \caption{isolet}
 \end{subfigure}
 \begin{subfigure}[b]{0.49\textwidth}
 \centering
 \includegraphics[width=\textwidth, trim = 0 1.5cm 0 2cm, clip=TRUE]{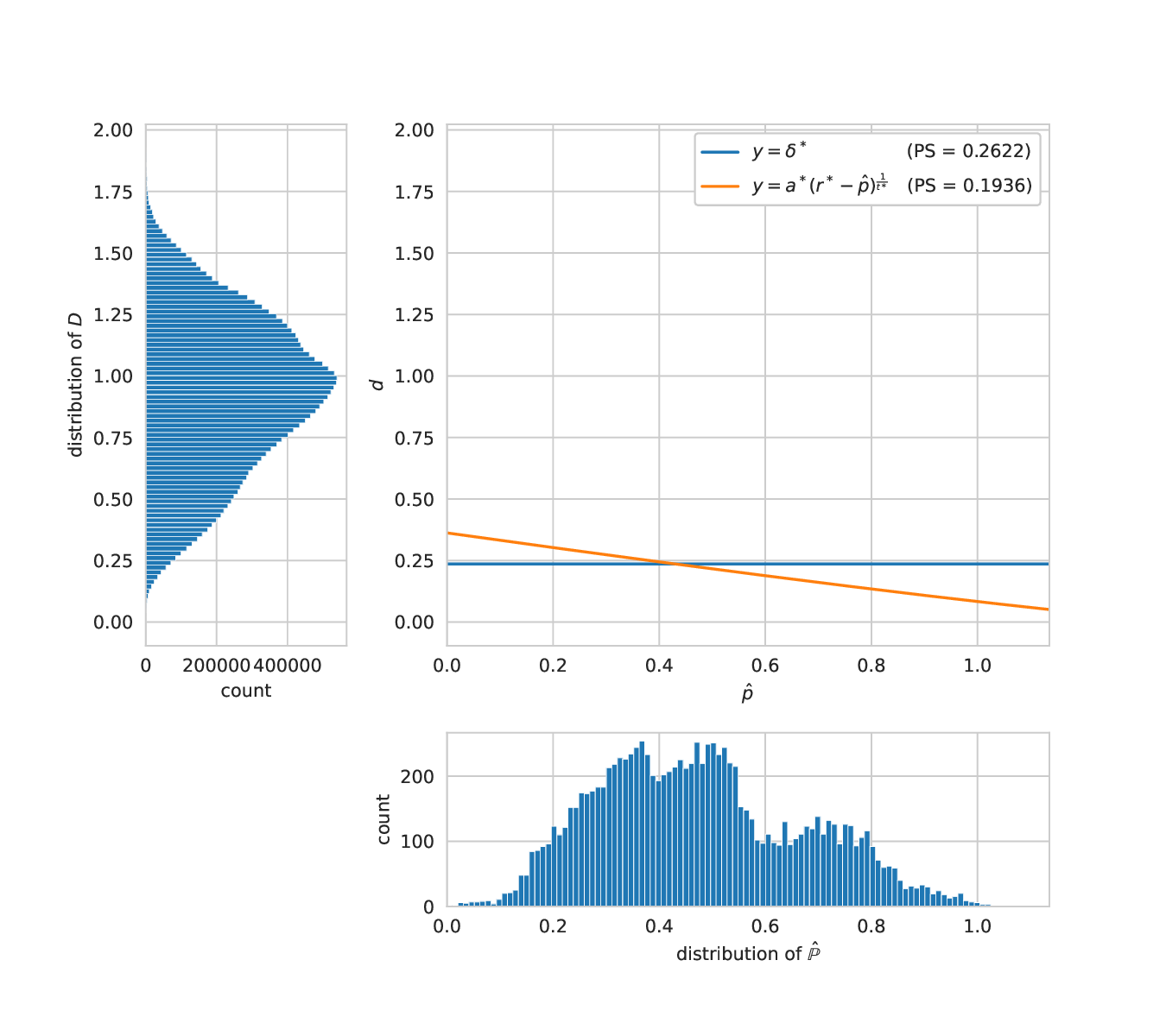}
 \label{fig:pendigits_graphs}
 \caption{pendigits}
 \end{subfigure}
 \begin{subfigure}[b]{0.49\textwidth}
 \centering
 \includegraphics[width=\textwidth, trim = 0 1.5cm 0 2cm, clip=TRUE]{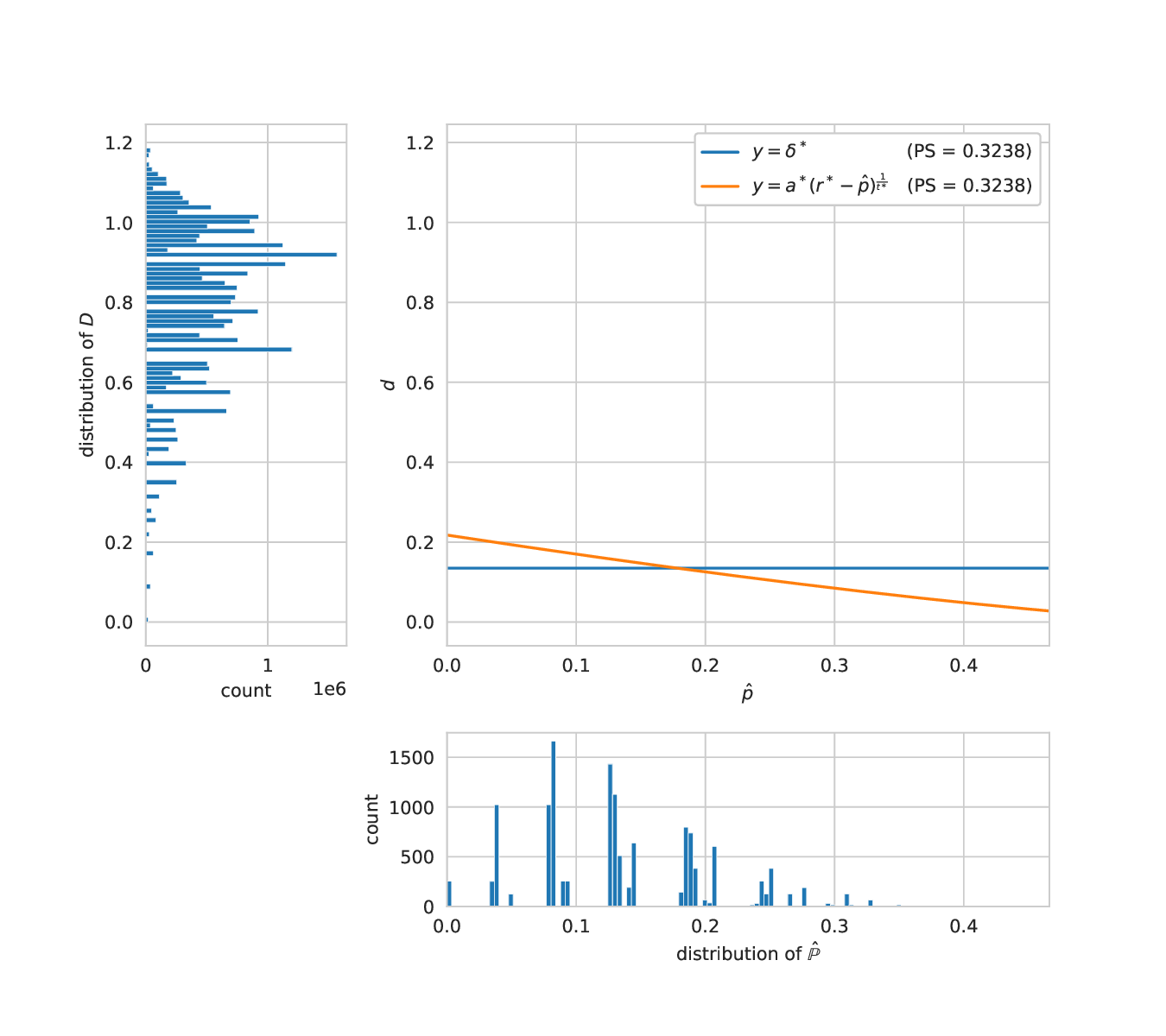}
 \label{fig:nursery_graphs}
 \caption{nursery}
 \end{subfigure}
\end{figure}

\subsection{Cold-start results}\label{apdx:cold_start_comp}
This section presents the numerical results obtained in our empirical investigation of the cold-start problem in Table~\ref{tab:res} for several budgets. It shows that the proposed PTR approach achieves competitive performance compared to the baseline methods, we also notice that most of the methods except for the k-means-based methods and PTR suffer from degraded performance compared to the random selection strategy.

\begin{table}[t]
 \caption{Average balanced classification accuracy (in \%) and standard deviation of random forest classifier with the initial training set obtained from different methods over 20 stratified random splits for different budgets $\mathcal{B}$. $^\uparrow/^\downarrow$ indicate statistically significantly better/worse performance than Random Selection RS, according to a Wilcoxon rank sum test ($p < 0.05$) \citep{Lehman75}.}
 \label{tab:res}
 \begin{adjustbox}{max width=\textwidth}
 \centering
 \begin{tabular}{llllllllll}
 \toprule
 Dataset & $\mathcal{B}$ & RS & KM & KM+ME & Km & AHC & FFT & APC & PTR\\
 \midrule
 \multirow{3}{*}{protein} & 3 & $16.9$ & $21.2^\uparrow$ & $\mathbf{23.9}^\uparrow$ & $21.2^\uparrow$ & $22.7^\uparrow$ & $17.4$ & $16.7$ & $22.1^\uparrow$\\
 && $(4.0)$ & $(1.8)$ & $(2.2)$ & $(4.4)$ & $(2.5)$ & $(3.3)$ & $(3.3)$ & $(6.2)$\\
 & 20 & $36.4$ & $42.1^\uparrow$ & $45.5^\uparrow$ & $39.2$ & $43.4^\uparrow$ & $26.1^\downarrow$ & $39.2$ & $\mathbf{54.0}^\uparrow$\\
 && $(3.8)$ & $(3.9)$ & $(2.5)$ & $(4.4)$ & $(3.4)$ & $(3.4)$ & $(3.7)$ & $(3.4)$\\
 \midrule
 \multirow{3}{*}{banknote} & 3 & $55.5$ & $74.0^\uparrow$ & $\mathbf{84.3}^\uparrow$ & $62.5^\uparrow$ & $63.7^\uparrow$ & $58.2^\uparrow$ & $58.7$ & $70.2^\uparrow$\\
 && $(7.2)$ & $(4.6)$ & $(5.6)$ & $(3.3)$ & $(4.5)$ & $(7.3)$ & $(8.0)$ & $(14.7)$\\
 & 20 & $87.6$ & $90.7^\uparrow$ & $92.4^\uparrow$ & $92.3^\uparrow$ & $92.6^\uparrow$ & $71.9^\downarrow$ & $90.9^\uparrow$ & $\mathbf{93.9}^\uparrow$\\
 && $(2.9)$ & $(2.4)$ & $(2.0)$ & $(2.4)$ & $(2.9)$ & $(7.2)$ & $(3.2)$ & $(3.4)$\\
 \midrule
 \multirow{3}{*}{coil-20} & 3 & $12.6$ & $\mathbf{15.0}^\uparrow$ & $\mathbf{15.0}^\uparrow$ & $\mathbf{15.0}^\uparrow$ & $\mathbf{15.0}^\uparrow$ & $10.8^\downarrow$ & $11.7$ & $13.6$\\
 && $(2.6)$ & $(0.0)$ & $(0.0)$ & $(0.0)$ & $(0.0)$ & $(2.0)$ & $(2.3)$ & $(1.7)$\\
 & 20 & $42.0$ & $56.7^\uparrow$ & $63.0^\uparrow$ & $42.3$ & $58.1^\uparrow$ & $25.6^\downarrow$ & $41.4$ & $\mathbf{71.1}^\uparrow$\\
 && $(5.8)$ & $(3.7)$ & $(2.8)$ & $(3.5)$ & $(4.1)$ & $(2.5)$ & $(4.7)$ & $(3.8)$\\
 \midrule
 \multirow{3}{*}{isolet} & 3 & $07.6$ & $08.7^\uparrow$ & $09.7^\uparrow$ & $07.8$ & $09.1^\uparrow$ & $09.2^\uparrow$ & $07.5$ & $\mathbf{10.8}^\uparrow$\\
 && $(1.5)$ & $(0.9)$ & $(0.6)$ & $(1.6)$ & $(1.9)$ & $(1.0)$ & $(1.8)$ & $(1.1)$\\
 & 20 & $19.2$ & $27.9^\uparrow$ & $\mathbf{40.4}^\uparrow$ & $10.7^\downarrow$ & $28.2^\uparrow$ & $18.8$ & $21.1^\uparrow$ & $38.6^\uparrow$\\
 && $(2.7)$ & $(2.5)$ & $(3.2)$ & $(2.0)$ & $(2.1)$ & $(2.4)$ & $(3.1)$ & $(3.2)$\\
 \midrule
 \multirow{3}{*}{pendigits} & 3 & $21.5$ & $21.3$ & $22.5$ & $26.6^\uparrow$ & $19.4^\downarrow$ & $17.3^\downarrow$ & $17.8^\downarrow$ & $\mathbf{29.9}^\uparrow$\\
 && $(3.5)$ & $(1.9)$ & $(2.1)$ & $(2.6)$ & $(1.8)$ & $(3.7)$ & $(4.9)$ & $(0.0)$\\
 & 20 & $54.3$ & $72.3^\uparrow$ & $75.8^\uparrow$ & $64.0^\uparrow$ & $72.3^\uparrow$ & $34.8^\downarrow$ & $52.2$ & $\mathbf{87.7}^\uparrow$\\
 && $(5.9)$ & $(2.7)$ & $(2.3)$ & $(3.6)$ & $(2.5)$ & $(4.5)$ & $(5.9)$ & $(4.1)$\\
 \midrule
 \multirow{3}{*}{nursery} & 3 & $30.7$ & $29.2$ & $30.2$ & $25.0^\downarrow$ & $28.3^\downarrow$ & $30.0$ & $30.0$ & $\mathbf{35.1}^\uparrow$\\
 && $(4.0)$ & $(5.2)$ & $(6.5)$ & $(0.2)$ & $(3.9)$ & $(3.2)$ & $(3.7)$ & $(5.6)$\\
 & 20 & $\mathbf{55.3}$ & $52.8^\downarrow$ & $54.4$ & $32.9^\downarrow$ & $53.8$ & $39.8^\downarrow$ & $52.5^\downarrow$ & $54.1$\\
 && $(2.8)$ & $(3.3)$ & $(3.0)$ & $(1.1)$ & $(2.7)$ & $(1.1)$ & $(4.9)$ & $(4.5)$\\
 \bottomrule
 \end{tabular}
 \end{adjustbox}
\end{table}

\subsection{Active learning results}\label{Fig:poolBased}
This last section illustrates the empirical results of pool-based active learning for the rest of the considered datasets using our proposed approach, compared against the random sampling strategy. Figures \ref{fig:protein_banknote}, \ref{fig:coil-20_isolet} and \ref{fig:pendigits_nursery}(a) show a clear and significant gain of the PTR methodology for pool-based active learning strategies against random selection. The results in Figure \ref{fig:pendigits_nursery}(b) indicate that the PTR approach is not robust against the high label imbalance datasets. We observed that the propagation step over the PTR tends to amplify the class imbalance in the resulting training set.

\begin{figure}[t]
\centering
 \caption{Average balanced classification accuracy and standard deviation of different pool-based active learning strategies and budgets on protein and banknote datasets, using random forest estimator over 20 stratified random splits.}
 \label{fig:protein_banknote}
 \begin{subfigure}[b]{1\textwidth}
 \centering
 \includegraphics[width=\textwidth]{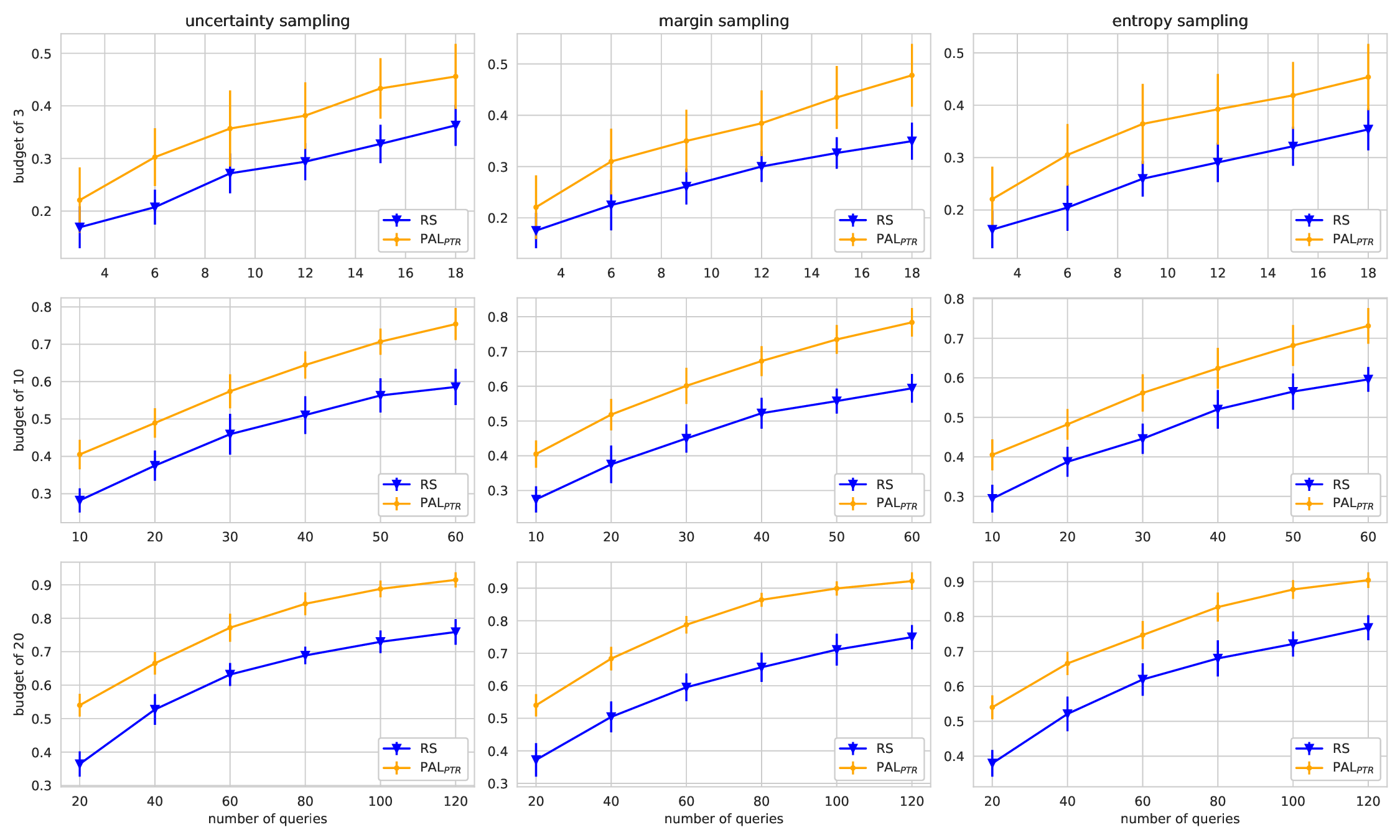}
 \caption{protein}
 \end{subfigure}
 \begin{subfigure}[b]{1\textwidth}
 \centering
 \includegraphics[width=\textwidth]{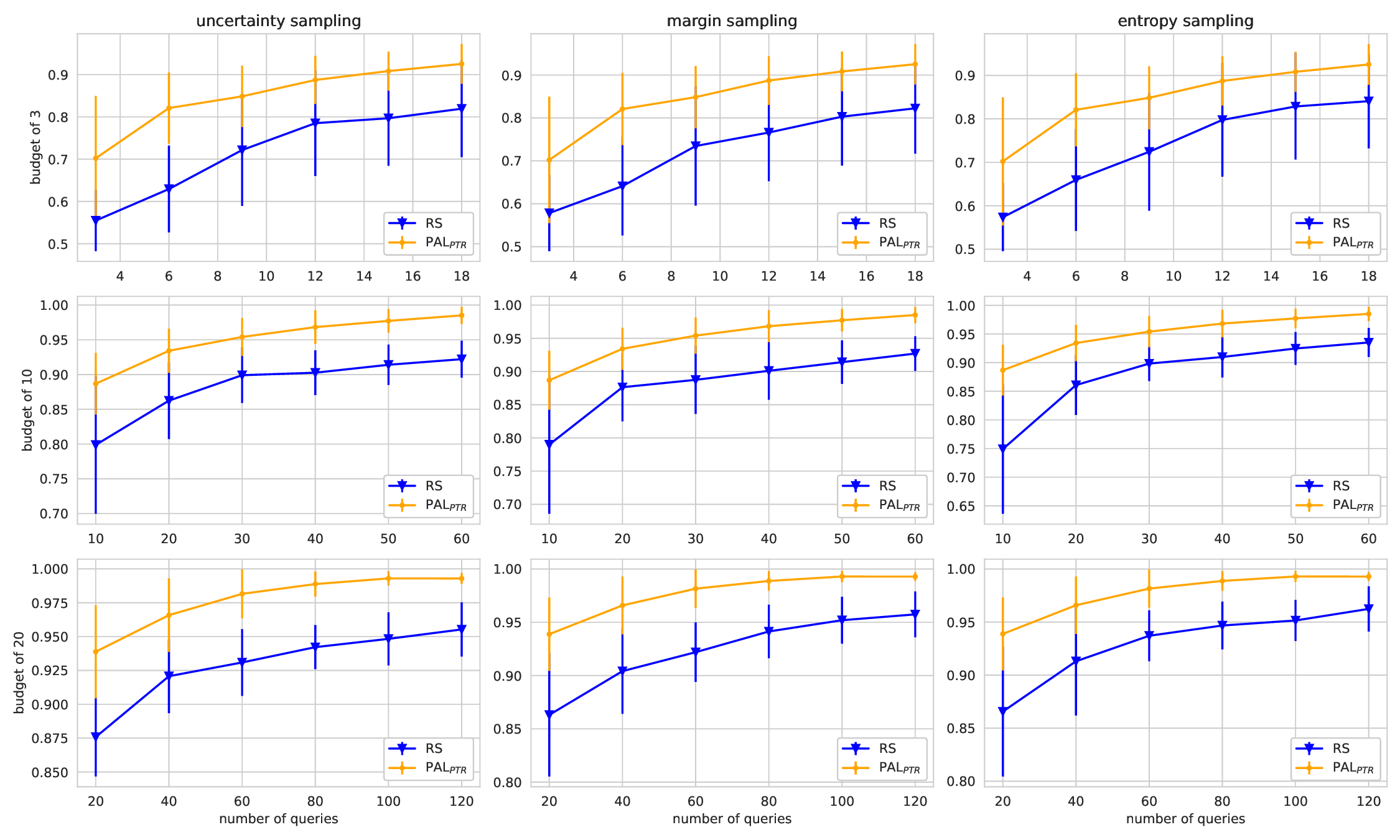}
 \caption{banknote}
 \end{subfigure}
\end{figure}

\begin{figure}[t]
 \centering
 \caption{Average balanced classification accuracy and standard deviation of different pool-based active learning strategies and budgets on coil-20 and isolet datasets, using random forest estimator over 20 stratified random splits.}
 \label{fig:coil-20_isolet}
 \begin{subfigure}[b]{1\textwidth}
 \centering
 \includegraphics[width=\textwidth]{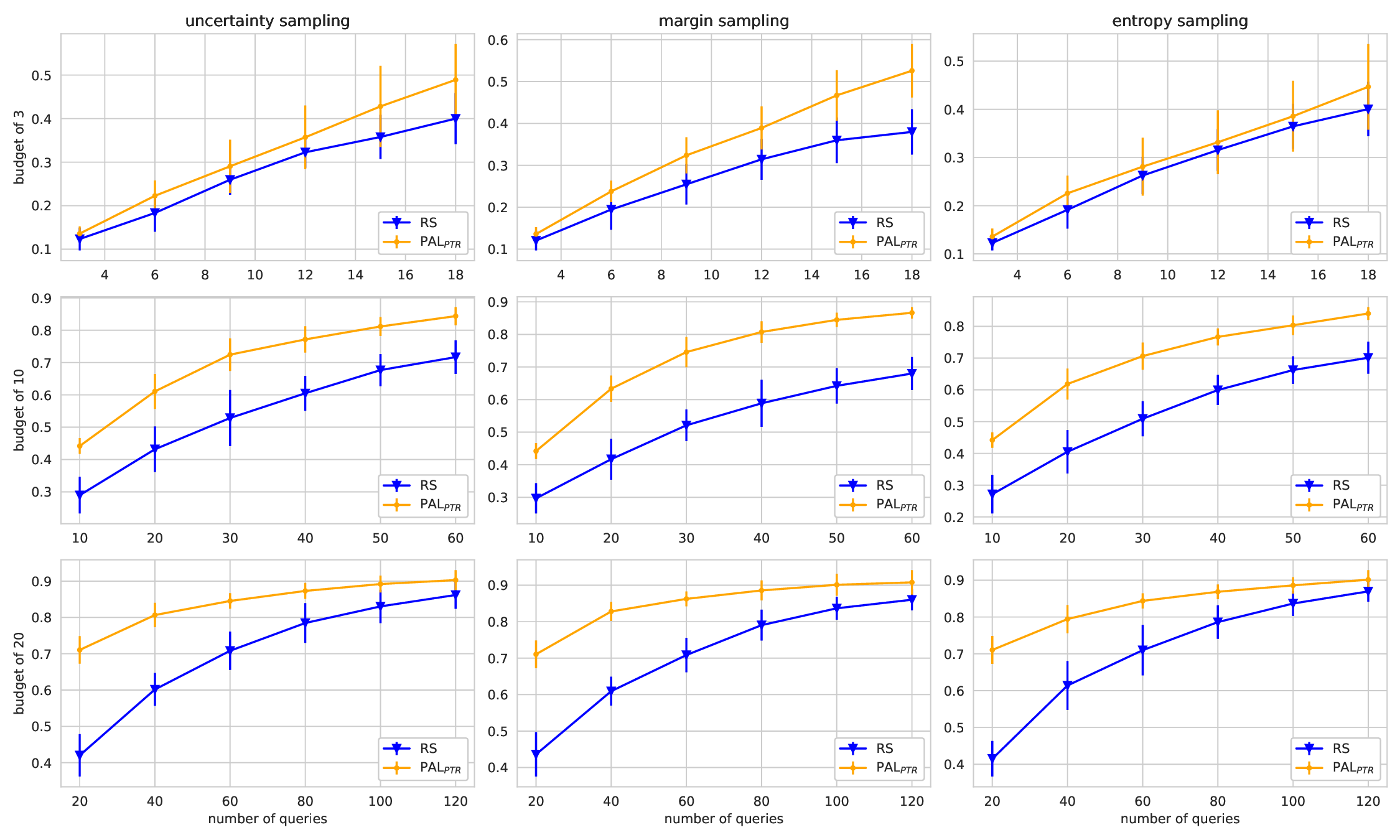}
 \caption{coil20}
 \end{subfigure}
 \begin{subfigure}[b]{1\textwidth}
 \centering
 \includegraphics[width=\textwidth]{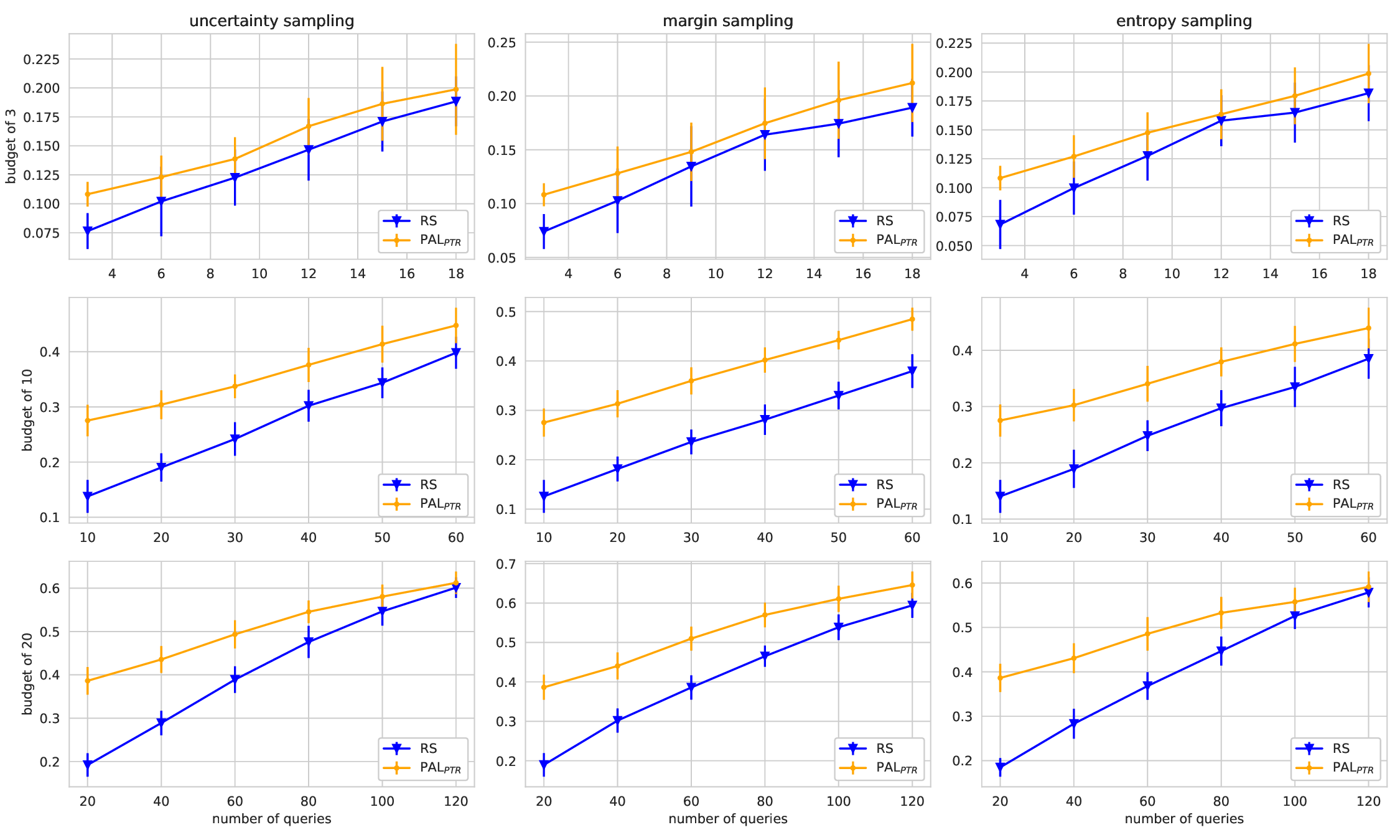}
 \caption{isolet}
 \end{subfigure}
\end{figure}

\begin{figure}
 \centering
 \caption{Average balanced classification accuracy and standard deviation of different pool-based active learning strategies and budgets on pendigits and nursery datasets, using random forest estimator over 20 stratified random splits.} \label{fig:pendigits_nursery}
 \begin{subfigure}[b]{1\textwidth}
 \centering
 \includegraphics[width=\textwidth]{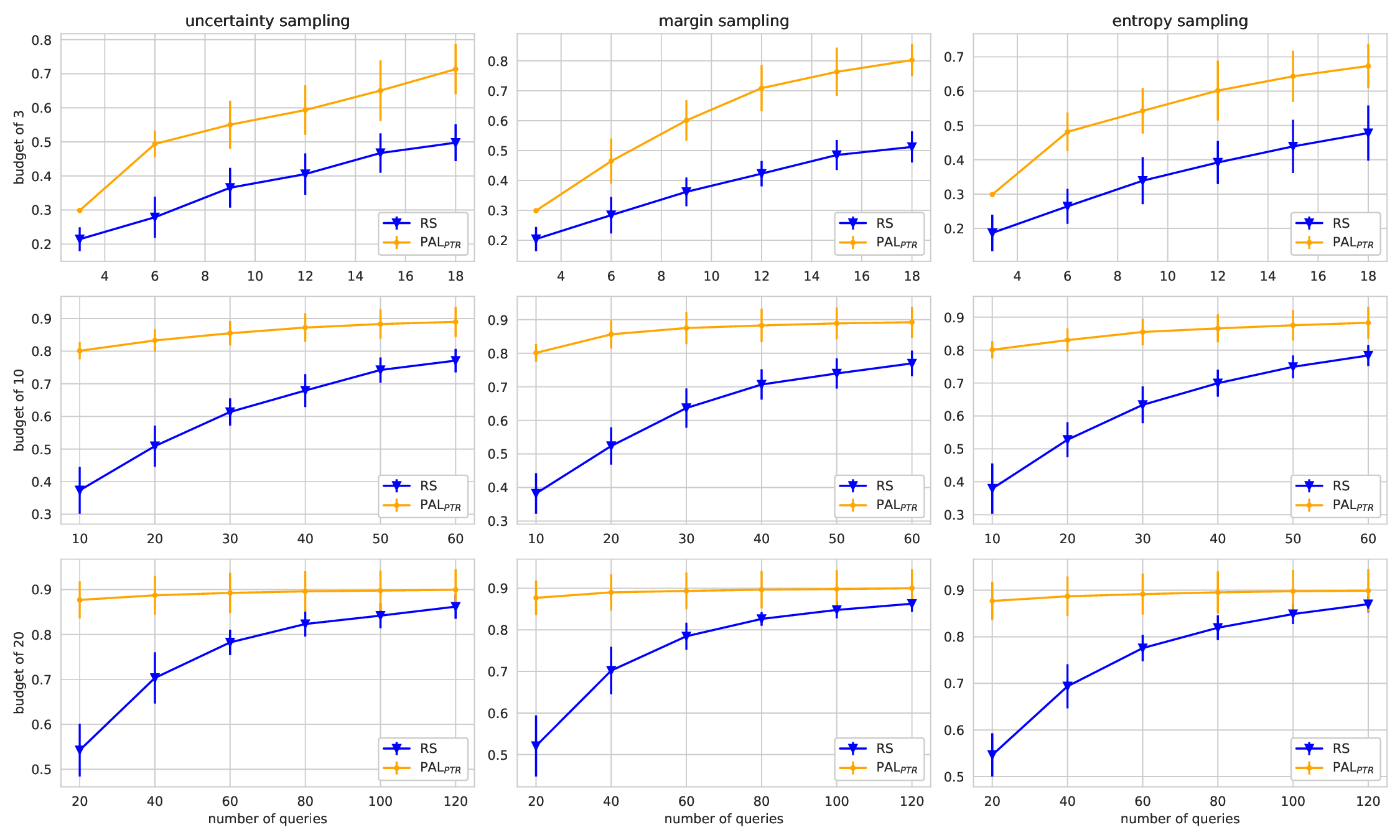}
 \caption{pendigits}
 \end{subfigure}
 \begin{subfigure}[b]{1\textwidth}
 \centering
 \includegraphics[width=\textwidth]{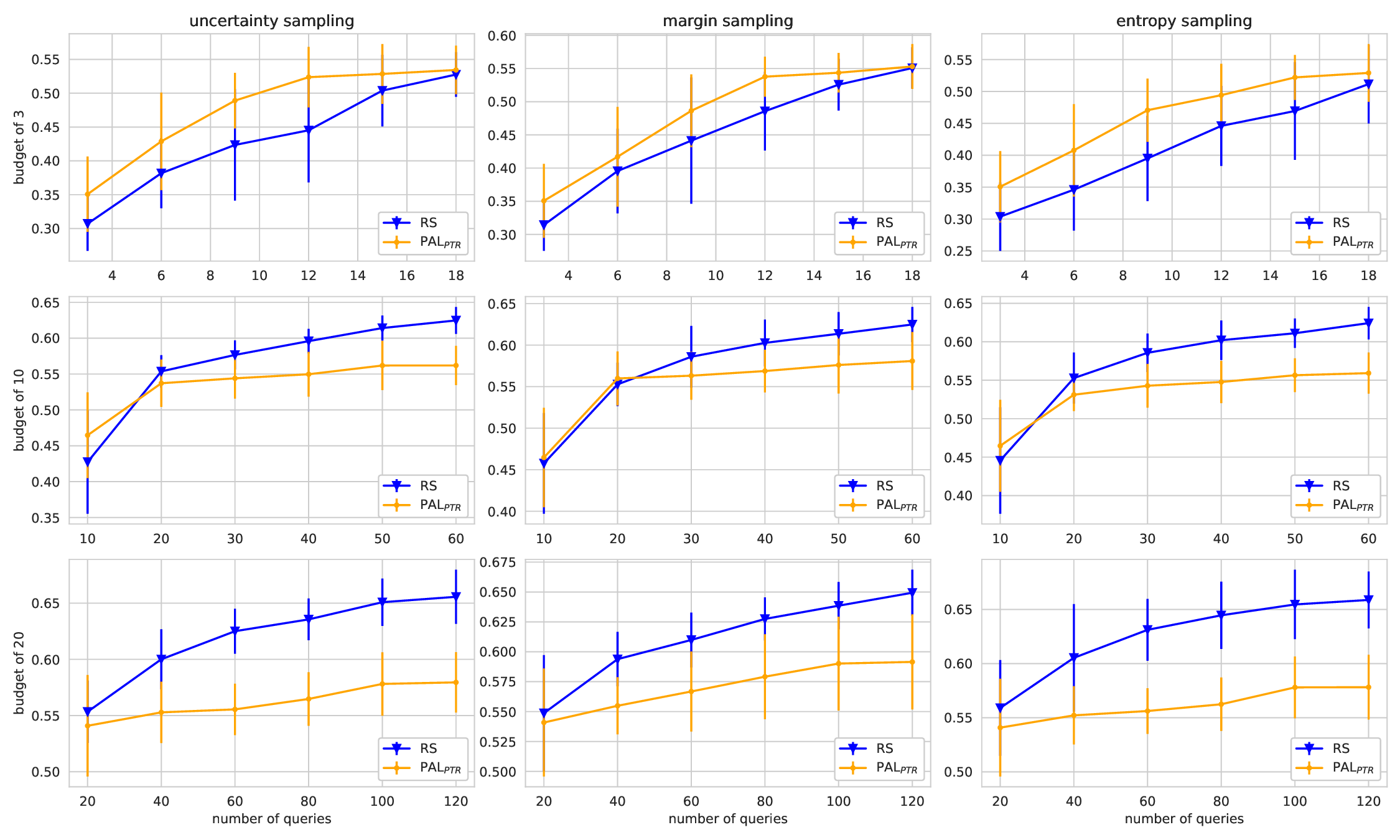}
 \caption{nursery}
 \end{subfigure}
\end{figure}
\end{document}